\documentclass{article}

\PassOptionsToPackage{numbers, compress}{natbib}

\usepackage[preprint]{neurips_2026}

\usepackage{caption}
\usepackage[utf8]{inputenc} 
\usepackage[T1]{fontenc}    
\usepackage{hyperref}       
\usepackage{url}            
\usepackage{booktabs}       
\usepackage{amsfonts}       
\usepackage{nicefrac}       
\usepackage{microtype}      
\usepackage{xcolor}         
\usepackage{enumitem}
\usepackage{etoolbox}
\usepackage{booktabs,makecell,tabularx} 
\usepackage{resizegather}
\usepackage{multirow}
\usepackage{amssymb}
\usepackage{amsmath}
\usepackage{graphicx}
\usepackage{tcolorbox}
\usepackage{subcaption}
\usepackage{colortbl}
\usepackage{listings}
\usepackage{makecell}
\usepackage{wrapfig} 
\usepackage{xspace}
\usepackage{tcolorbox}
\usepackage{longtable}
\usepackage{amsthm}
\usepackage{amsmath}
\usepackage[export]{adjustbox}

\newtheorem{proposition}{Proposition}

\newcommand{\M}{CAVI\xspace}

\title{Through the Lens of Character: Resolving Modality-Role Interference in Multimodal Role-Playing Agent}
  
\author{Yihong Tang$^{1,2}$, Kehai Chen$^{1,}$\thanks{Corresponding author.}~~, Xuefeng Bai$^{1}$,  Min Zhang$^1$
\\
        $^1$Harbin Institute of Technology, Shenzhen, China \\
        $^2$Shenzhen Loop Area Institute (SLAI), Shenzhen, China \\
        \texttt{\{toyhom@stu.hit.edu.cn,  chenkehai@hit.edu.cn\}}
}

\begin{document}

\maketitle

\begin{abstract}
The advancement of Multimodal Large Language Models (MLLMs) has expanded Role-Playing Agents (RPAs) into visually grounded environments. However, human vision is inherently subjective and identity-driven, whereas existing MLLMs extract objective, character-agnostic features for general tasks. In RPAs, this generic visual noise overpowers fragile character traits, causing Modality-Role Interference (MRI), where agents struggle to integrate visual grounding and character consistency. To address this, we introduce the training-free Character-Aware Visual Intervention (CAVI) framework, enabling agents to perceive the world through the lens of character. CAVI systematically targets MRI: macroscopically, Character-Guided Token Pruning (CTP) restricts the visual receptive field to role-relevant entities; microscopically, Orthogonal Feature Modulation (OFM) projects tokens onto a character-context subspace to extract aligned facts; and during decoding, Modality-Adaptive Role Steering (MARS) dynamically optimizes steering intensity based on visual reliance. Extensive experiments show CAVI effectively alleviates MRI, significantly enhancing character-consistent multimodal interactions.
\end{abstract}

\section{Introduction}\label{sec: intro}

With the advancement of Multimodal Large Language Models (MLLMs), Role-Playing Agents (RPAs)~\citep{chen2024oscarsaitheatersurvey} have naturally extended their boundaries to visually grounded environments. Many multimodal RPAs explicitly integrate rich role-playing profiles, diverse visual contexts, and the inherent generalization capabilities of MLLMs~\citep{dai2025mmrole, zhang2025video2roleplay}, yielding promising results.

Despite their success, existing methods often overlook the subjective nature of character perception. In the real world, human visual perception is heavily conditioned by identity. As shown in Figure~\ref{fig: main} (a), when observing a forest, a botanist naturally focuses on vegetation health, whereas a lumberjack evaluates timber volume. However, existing MLLMs employ vision encoders that extract objective, character-agnostic features tailored primarily for general visual tasks. 

In role-playing scenarios, the massive influx of these objective visual details structurally clashes with the persona, a cognitive mismatch we formalize as \textbf{Modality-Role Interference (MRI)}. As illustrated in Figure~\ref{fig: main} (b), when given a fixed image and question but different role prompts, the base model retrieves highly similar visual evidence. This indicates that rather than organically adopting the character's unique perspective, the model's visual perception is only weakly modulated by identity. Consequently, the agent struggles to generate responses that are both visually grounded and faithfully aligned with the persona.
While supervised fine-tuning could mitigate MRI, it brings prohibitive annotation costs by requiring customized image-query-response triplets for different characters viewing the same image.

To this end, this paper introduces a training-free Character-Aware Visual Intervention (\textbf{CAVI}) framework, designed to imbue MLLMs with the capacity to perceive the world \textbf{\textit{Through the Lens of Character}}. From an information-theoretic perspective, CAVI functions as a deterministic cognitive bottleneck that synergizes macro-level, micro-level, and decoding-level interventions. \textit{At the macro-level}, Character-Guided Token Pruning (CTP) explicitly bounds the visual receptive field to retain only character-relevant entities. Subsequently, at the \textit{micro-level}, Orthogonal Feature Modulation (OFM) projects these retained tokens onto a low-rank character-context subspace, purifying role-consistent visual facts from objective noise.  During \textit{decoding level}, Modality-Adaptive Role Steering (MARS) with a contextual re-injection mechanism dynamically adjust the role-steering intensity based on the model's instantaneous visual reliance. 

Through this pipeline, the agent is able to smoothly synthesize rigorous visual grounding with vivid role portrayal.
Specifically, comprehensive experiments reveal that CAVI achieves state-of-the-art multimodal role-playing performance among comparable methods, while simultaneously indicating robust generalization to unseen characters, yielding clear mechanistic interpretability, and introducing minimal additional inference overhead.

Our contributions are summarized as follows: 
(1) We identify Modality-Role Interference (MRI) in Multimodal RPAs, highlighting the cognitive mismatch between objective MLLM visual features and subjective character perception, and propose the \textit{Through the Lens of Character} paradigm; 
(2) We introduce CAVI, an training-free framework that systematically resolves MRI via macro-level token pruning (CTP), micro-level orthogonal feature modulation (OFM), and decoding-level modality-adaptive steering (MARS); 
(3) Supported by information-theoretic analysis, extensive experiments indicate that CAVI effectively alleviates MRI, significantly enhancing the overall performance of Multimodal RPAs.

\section{Related Work}

\begin{figure}[t]
    \centering
    \begin{subfigure}[c]{0.4\linewidth}
        \centering
        \includegraphics[width=.9\linewidth]{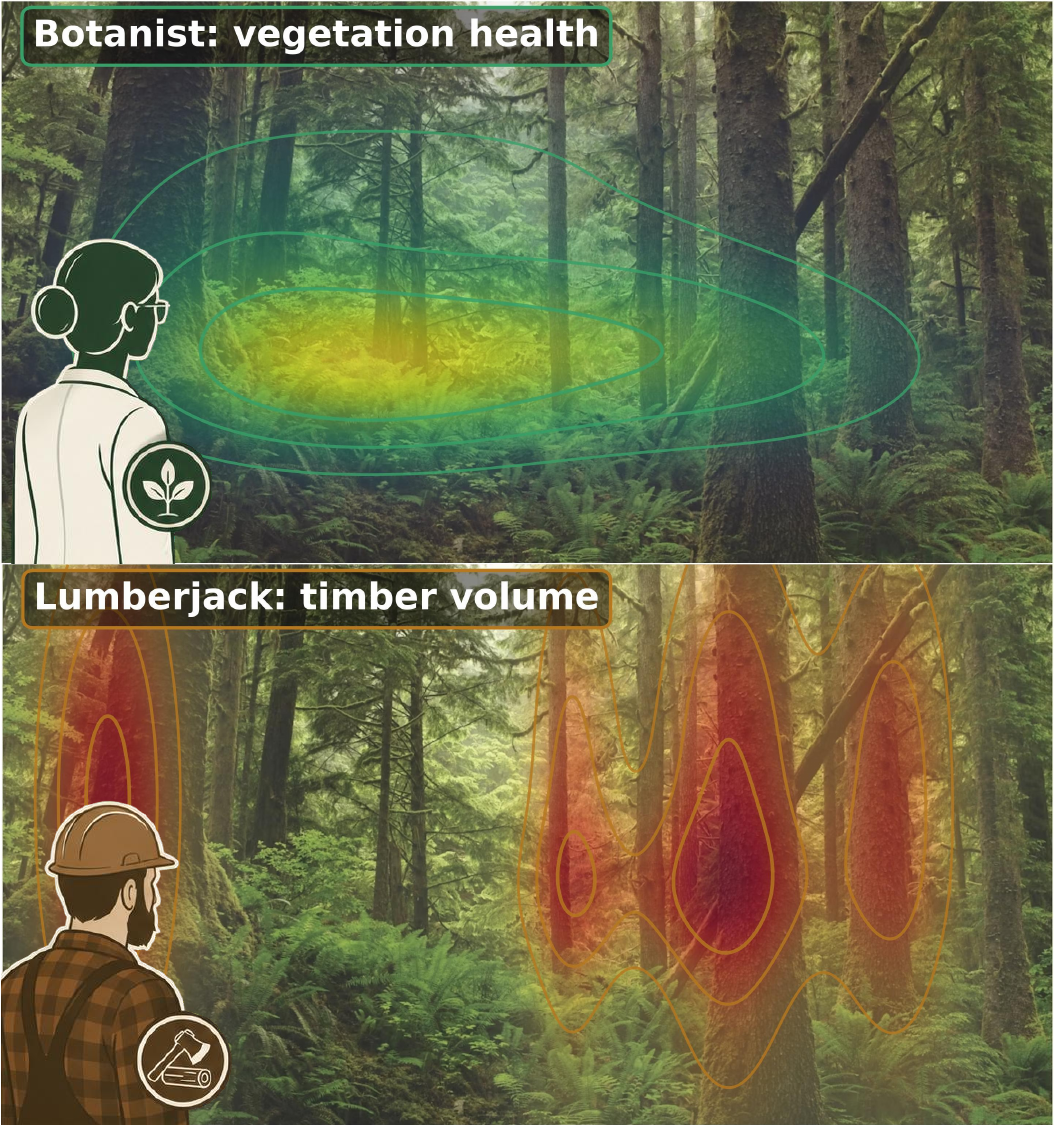}
        \caption{} 
        \label{fig:main_a}
    \end{subfigure}
    \begin{subfigure}[c]{0.5\linewidth}
        \centering
        \includegraphics[width=.9\linewidth]{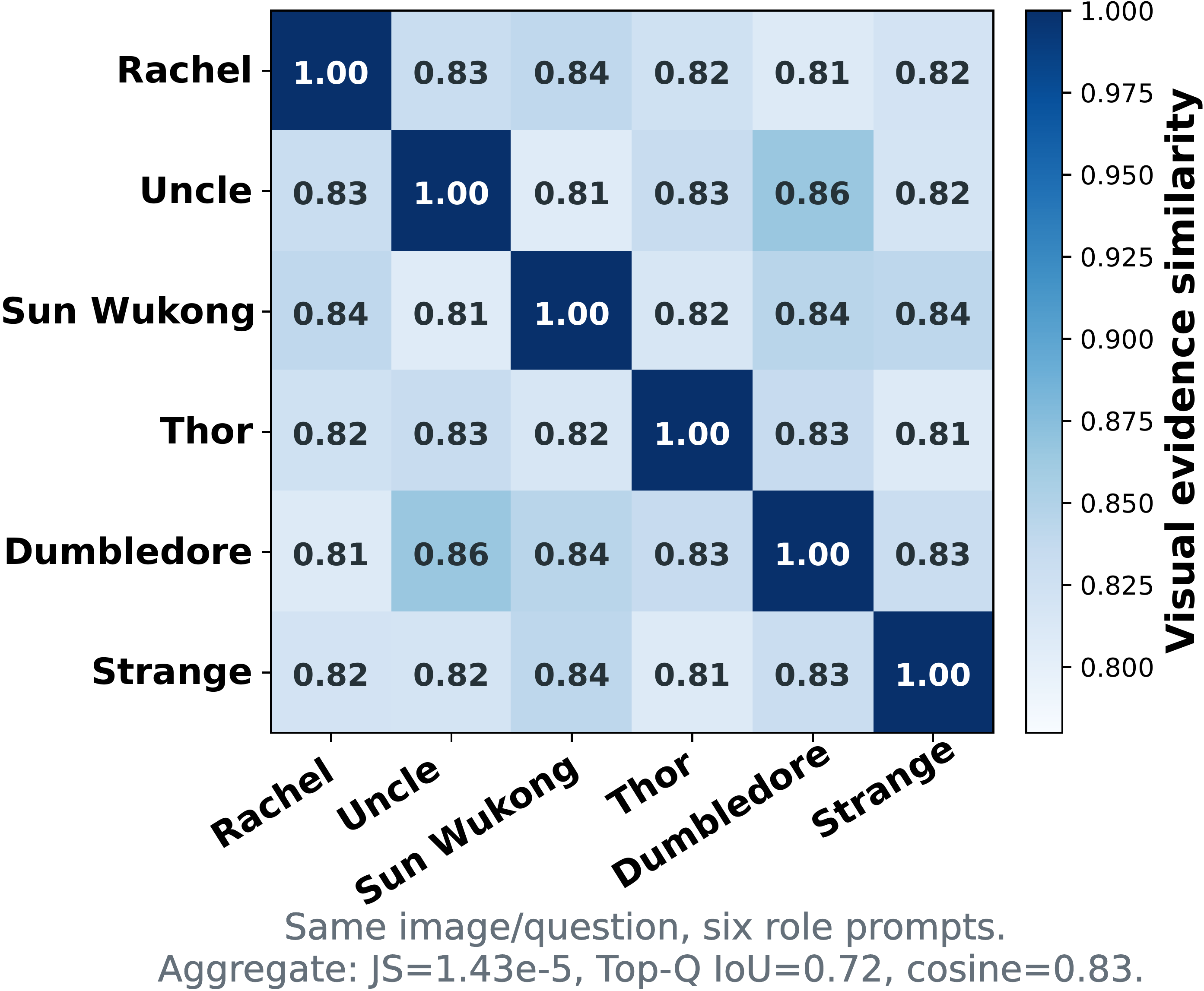}
        \caption{} 
        \label{fig:main_b}
    \end{subfigure}
    \caption{Illustration of Modality-Role Interference (MRI). \textbf{(a)} Human visual perception is subjectively conditioned by identity, where different roles (e.g., a botanist vs. a lumberjack) naturally focus on distinct visual aspects of the same environment. \textbf{(b)} Existing MLLMs retrieve highly similar visual evidence across different roles for the same image and question, indicating that character identity only weakly modulates visual perception.}
    \label{fig: main}
    \vspace{-0.3cm}
\end{figure}

\subsection{Role-Playing Agents with Large Models}

Large language models have substantially advanced text-based Role-Playing Agents (RPAs)~\citep{chen2024oscarsaitheatersurvey}. Early methods primarily rely on prompt engineering and in-context learning~\citep{tang-etal-2023-enhancing-personalized,Wei2023MultiPartyCC,Mao2023EditingPF,wang-etal-2024-incharacter}, while subsequent works improve character fidelity by fine-tuning on role-specific corpora~\citep{shao2023character,Xu2024CharacterID,zhou2024characterglm,wang-etal-2024-rolellm,yu2024neeko}. Recent studies further enhance persona consistency through memory retrieval, structured knowledge, and explicit cognitive simulation~\citep{park2023generativeagents,Sun2024IdentityDrivenHR,xu2024characterdestiny,park2024charactergpt,wang2025rolerag,wang2025coser,tang2025thinking,tang2026character}. Recently, RPAs have naturally expanded into visually grounded scenarios via Multimodal Large Language Models (MLLMs)~\citep{yin2023survey, dai2025mmrole, zhang2025video2roleplay}, exploring applications such as visual role-play attacks~\citep{ma2024visualroleplay}, embodied environment interaction~\citep{wang2024mobileagent}, and role-aware multimodal retrieval~\citep{wang2025rolerag}.
However, these approaches invariably pair text-aligned personas with generic, character-agnostic vision encoders, introducing objective visual noise that triggers Modality-Role Interference (MRI). Distinct from prior methods, our CAVI framework resolves MRI by shifting visual perception from an objective baseline to a subjective, character-aware perspective.

\subsection{Visual Hallucination Mitigation in MLLMs}

Although MLLMs have achieved strong performance on vision-language tasks, they still suffer from visual hallucination, where generated responses mention objects, attributes, relations, or events inconsistent with the given image~\citep{liu2024surveyhallucination}. Existing analyses attribute this issue to insufficient visual grounding, biased vision-language alignment, and over-reliance on parametric knowledge when visual evidence is weak~\citep{li2023evaluating,fu2023mme,wang2023amber,chen2024multiobject}. To mitigate hallucination, prior work has explored training-based alignment with preference data, counterfactual samples, or fine-grained feedback~\citep{xie2024vdpo,xiao2025detecting,liu2024cliphallucination}, post-hoc verification and revision through LURE, Woodpecker, and LogicCheckGPT~\citep{zhou2023lure,yin2023woodpecker,wu2024logiccheckgpt}, and training-free decoding-time intervention methods such as VCD, OPERA, HALC, AGLA, and attention-based diagnostics that suppress language priors, penalize unreliable attention patterns, or strengthen grounding in discriminative visual regions~\citep{leng2024mitigating,huang2023opera,chen2024halc,an2025agla,jiang2025devils,gong2024damro}. These methods improve the factual reliability of MLLMs without always requiring expensive retraining.
However, standard hallucination mitigation generally pursues objective visual faithfulness, which differs from the subjective nature of character perception. In contrast, CAVI modulates visual evidence according to the character's unique worldview, enabling responses that are both visually grounded and role-consistent.

\section{Methodology}
\label{sec:method}

\begin{figure}[t]
    \centering
    \includegraphics[width=\linewidth]{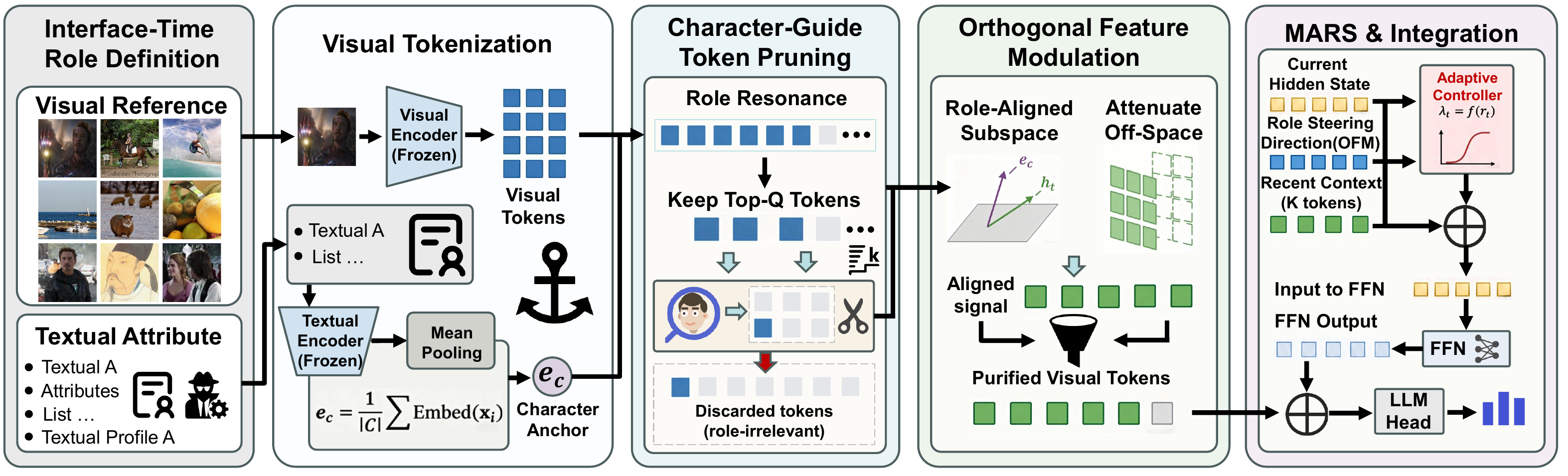}
    \caption{\textbf{Overview of Character-Aware Visual Intervention (CAVI).} CAVI resolves Modality-Role Interference via a training-free pipeline: \textbf{(1) Character Anchor} extracts a semantic role reference from textual profiles. \textbf{(2) Token Pruning (CTP)} macroscopically discards role-irrelevant visual tokens. \textbf{(3) Feature Modulation (OFM)} microscopically purifies retained tokens by projecting out orthogonal out-of-character noise. \textbf{(4) MARS \& Integration} dynamically adapts steering intensity based on visual reliance, and re-injects purified tokens into FFNs to preserve factual grounding.}
    \label{fig:main}
\end{figure}

To resolve Modality-Role Interference (MRI) without incurring prohibitive fine-tuning costs, we propose \textbf{CAVI} (\textbf{C}haracter-\textbf{A}ware \textbf{V}isual \textbf{I}ntervention), a deterministic, fully training-free framework built upon the information bottleneck principle. CAVI systematically aligns visual perception and role-playing dialogue generation through macroscopic token pruning (CTP), microscopic orthogonal feature modulation (OFM), and dynamic activation steering (MARS).

\subsection{Problem Formulation and Character Anchor}
Given a Multimodal Large Language Model (MLLM), the input sequence is $\mathcal{X} = [\mathbf{Z}_{raw}, \mathbf{X}_{sys}, \mathbf{X}_{query}]$, where $\mathbf{Z}_{raw} \in \mathbb{R}^{K \times d}$ represents objective visual tokens, and $\mathbf{X}_{sys}$ contains the textual character profile $\mathcal{C}$. To establish a stable geometric reference for the target character in the latent space, we define the \textbf{Character Anchor} $\mathbf{e}_c \in \mathbb{R}^d$ by mean-pooling the hidden states of $\mathcal{C}$ at the embedding layer:
$
\mathbf{e}_c = \frac{1}{|\mathcal{C}|} \sum_{x_i \in \mathcal{C}} \text{Embed}(x_i)
$.

\textbf{Information-Theoretic View of MRI:} In standard decoding, the dense visual representations $\mathbf{Z}_{raw}$ introduce an objective semantic subspace. From an information bottleneck (IB) perspective, an ideal conversational hidden state $h_t$ should maximize mutual information (MI) with both the factual visual grounding $\mathcal{V}_{fact}$ and the character anchor $\mathbf{e}_c$, while minimizing MI with irrelevant out-of-character background noise $\mathcal{Z}_{OOC}$:
\begin{equation}
\label{eq:ib}
\mathcal{L}_{IB} = \max I(h_t; \mathcal{V}_{fact}, \mathbf{e}_c) - \beta I(h_t; \mathcal{Z}_{OOC}).
\end{equation}
During cross-modal fusion, the massive influx of generic visual tokens forces the model to attend to $\mathcal{Z}_{OOC}$, structurally overwriting the fragile role traits encoded in $\mathbf{e}_c$.


\subsection{Macro-Level: Character-Guided Token Pruning (CTP)}
To truncate macroscopic global background noise, CTP extracts an optimal visual subset $\mathbf{Z}_{sub} \subset \mathbf{Z}_{raw}$. For each visual token $z_k \in \mathbf{Z}_{raw}$, we compute its \textit{Semantic Resonance Score} with the character anchor:
\begin{equation}
\label{eq:resonance}
\mathcal{S}_k = \frac{z_k^\top \mathbf{e}_c}{\lVert z_k \rVert_2 \lVert \mathbf{e}_c \rVert_2}, \quad \forall k \in \{1, \dots, K\}.
\end{equation}
We then apply a deterministic indicator function to construct $\mathbf{Z}_{sub} \in \mathbb{R}^{Q \times d}$ by retaining exactly the top-$Q$ tokens (the top-q proportion): $\mathbf{Z}_{sub} = \{ z_k \mid \mathbb{I}(\mathcal{S}_k \ge \tau_Q) = 1 \}$.

\subsection{Micro-Level: Orthogonal Feature Modulation (OFM)}
While CTP filters tokens globally, individual retained tokens still contain an entangled superposition of character-relevant facts and objective Out-Of-Character (OOC) noise ($\hat{z} = z^{role} + z^{noise}$). As mathematically proven in Appendix~\ref{app:prop1}, standard scalar attention mechanisms fundamentally fail to decouple this intra-token feature entanglement.

To physically extract role-consistent facts, at decoding step $t$ and layer $l$, the conversational hidden state $h_t^{(l)}$ and the anchor $\mathbf{e}_c$ span \textbf{Character-Context Subspace} $\mathcal{V}_{CC} = \text{span}(h_t^{(l)}, \mathbf{e}_c)$. We construct an orthonormal basis $\mathbf{U}_t = [u_1, u_2] \in \mathbb{R}^{d \times 2}$ using the Gram-Schmidt process:
\begin{equation}
u_1 = \frac{\mathbf{e}_c}{\|\mathbf{e}_c\|_2}, \quad v_2 = h_t^{(l)} - (h_t^{(l)\top} u_1)u_1, \quad u_2 = \frac{v_2}{\|v_2\|_2}.
\end{equation}
The orthogonal projection matrix is defined as $\mathbf{P}_{CC} = \mathbf{U}_t \mathbf{U}_t^\top \in \mathbb{R}^{d \times d}$. For each token $\hat{z} \in \mathbf{Z}_{sub}$, OFM modulates the feature by heavily attenuating the orthogonal OOC noise residual:
\begin{equation}
\tilde{z} = \mathbf{P}_{CC}\hat{z} + \gamma (\mathbf{I} - \mathbf{P}_{CC})\hat{z},
\end{equation}
where $\gamma \in [0, 1)$ strictly bounds noise leakage. Appendix~\ref{app:prop2} geometrically proves that projecting onto $\mathcal{V}_{CC}$ leverages the high dimensionality of MLLMs ($d \gg 1$) to exponentially crush the expected OOC noise variance, amplifying the intra-token Signal-to-Noise Ratio (SNR) by orders of magnitude.

\subsection{Decoding-Level: Modality-Adaptive Role Steering (MARS)}
\label{sec:mars}
During prolonged auto-regressive decoding, models inherently suffer from role drift. Meanwhile, applying a static textual steering vector indiscriminately overwrites visual grounding, causing hallucinations. MARS dynamically modulates the steering intensity based on the model's instantaneous visual reliance.

For a targeted set of intervention layers $\mathcal{L}_{steer}$, we offline extract an expected perturbation direction $\mathbf{v}_{r}^{(l)} \in \mathbb{R}^d$ using a paired calibration dataset: $\mathbf{v}_{r}^{(l)} = \frac{1}{N} \sum_{i=1}^{N} \left( h_{role, i}^{(l)} - h_{neu, i}^{(l)} \right)$, where $h_{role, i}^{(l)}$ and $h_{neu, i}^{(l)}$ denote the hidden states produced under role-playing prompts and neutral prompts, respectively.
During online decoding, we quantify the model's instantaneous visual reliance $\omega_t^{(l)}$ by aggregating the attention mass over all heads $N_H$ assigned to the visual tokens $\mathcal{I}_{vis}$:
$\omega_t^{(l)} = \frac{1}{N_H} \sum_{h=1}^{N_H} \sum_{j \in \mathcal{I}_{vis}} \mathbf{A}_{t, j}^{(h, l)} \quad \in [0, 1].$
The conversational hidden state is seamlessly steered: $\tilde{h}_t^{(l)} = h_t^{(l)} + \alpha_t^{(l)} \mathbf{v}_{r}^{(l)}$, where the intensity $\alpha_t^{(l)}$ dynamically decreases as visual reliance increases:
\begin{equation}
\label{eq:mars_steering}
\small
\alpha_t^{(l)} = \lambda \cdot \max \left(0, 1 - \frac{\omega_t^{(l)}}{\tau_{vis}} \right),
\end{equation}
where $\lambda$ is the base coefficient and $\tau_{vis}$ is a saturation margin. Appendix~\ref{app:thm1} formally proves that Eq.~\ref{eq:mars_steering} is the exact closed-form solution to resolve the trade-off between character drift and visual hallucination risks.

\subsection{Integration-Level: Dynamic Contextual Re-Injection}
\label{sec:reinjection}
To prevent visual forgetting in deep layers where textual priors overwhelm visual grounding, we bypass the standard self-attention bottleneck by directly injecting the purified orthogonal visual subset $\tilde{\mathbf{Z}} = [\tilde{z}_1, \dots, \tilde{z}_Q]$ into the intermediate Feed-Forward Networks (FFN) \citep{zou2025look}. The forward pass is residually augmented:
\begin{equation}
\label{eq:reinjection}
\small
\text{FFN}(h_t^{(l)}|\tilde{\mathbf{Z}}) = \text{FFN}(\tilde{h}_t^{(l)}) + \eta \cdot \phi \left( \frac{h_t^{(l)} \tilde{\mathbf{Z}}^\top}{\sqrt{d}} \right) \tilde{\mathbf{Z}},
\end{equation}
where $\phi(\cdot)$ is the native non-linear activation function acting as unnormalized similarity gating, and $\eta \in (0, 1]$ bounds the injection strength. Because OOC noise within $\tilde{\mathbf{Z}}$ has already been structurally eliminated by OFM, this mechanism securely enforces character-consistent factual grounding without reintroducing interference.

\section{Experiments}
\label{sec:experiments}

\subsection{Experimental Settings}\label{sec: experimenta_settings}

\paragraph{Benchmarks and Datasets.}
We evaluate on MMRole, a multimodal role-playing benchmark that pairs character instructions with image-grounded questions. MMRole scores each answer along three axes: general dialogue quality, visual-textual grounding, and role-playing consistency. Concretely, the official evaluator reports instruction adherence (IA), fluency (Flu), coherency (Coh), image-text relevance (ITR), response accuracy (RA), personality consistency (PC), knowledge consistency (KC), and tone consistency (TC). The final score is defined as the rating ratio between model-generated responses and reference responses. This design mitigates systematic scoring bias of the evaluator model. Detailed definitions are summarized in Table~\ref{tab:mmrole_metrics}. The dataset contains 85,456 training samples, 216 in-domain test samples, and 78 out-of-domain test samples, covering 85 total characters and three dialogue scenarios: image comments, human-role conversations, and inter-role conversations. 

\paragraph{Implementation.}
All experiments are implemented with PyTorch and HuggingFace Transformers. 
The hyperparameters and operation vectors of CAVI and other baselines are tuned on 100 randomly sampled training examples and fixed across all test runs. Full hyperparameter list is reported in Appendix~\ref{app:experimental_details}. The main experiments report results on Qwen3-VL-8B-Instruct and Gemma-3n-E4B-it; ablations and analysis are run on Qwen3-VL-8B-Instruct unless otherwise stated.

\paragraph{Baselines.}
We categorize the baselines into two main groups. (1) \textbf{Open-Weight Models and Visual Hallucination Mitigation Methods:} We first include supervised fine-tuning (\textbf{SFT}) on the MMRole training set. To compare against general hallucination mitigation techniques, we evaluate \textbf{MemVR}~\citep{zou2025look}, which uses memory-guided visual refinement for more faithful responses; \textbf{ClearSight}~\citep{yin2025clear}, which enhances visual grounding through evidence-aware generation; \textbf{AIR}~\citep{zhu2026look}, which adaptively revises image-conditioned responses for robustness; and \textbf{VCD}~\citep{leng2024mitigating}, which mitigates hallucination by contrasting visual contexts.  (2) \textbf{Closed-Source Large MLLMs:} We evaluate leading proprietary models, including \textbf{GPT-5}~\citep{singh2026openaigpt5card}, \textbf{Gemini-Pro-3}~\citep{deepmind2025gemini3}, and \textbf{Claude-Opus-4}~\citep{anthropic2025claude4} via their APIs, to establish performance upper bounds.

\subsection{Main Results}

\paragraph{In-domain Results.}
As shown in Table~\ref{tab:in_test_main_results}, \textbf{\M} achieves the highest overall scores among open-weight models, approaching massive proprietary models like GPT-5. Crucially, the SFT improves surface-level fluency but severely degrades role consistency. This confirms that standard multimodal role-playing datasets, which lack expensive, character-specific visual perspectives, fail to resolve and even exacerbate \textbf{Modality-Role Interference (MRI)}. Furthermore, traditional visual hallucination mitigations (e.g., VCD, MemVR) improve visual relevance but struggle to maintain persona traits due to their \textbf{character-agnostic} feature. In contrast, \textbf{\M} stands out by seamlessly synthesizing rigorous visual grounding with vivid role consistency without parameter updates.

\begin{table}[t]
\centering
\scriptsize
\caption{Main MMRole in-domain results. IA: instruction adherence; Flu: fluency; Coh: coherency; ITR: image-text relevance; RA: response accuracy; PC/KC/TC: consistency for personality/knowledge/tone. The best results within each model group are \textbf{bolded}. The final metric is the model-to-reference score ratio, where even marginal numerical gains signify substantial improvements.}
\label{tab:in_test_main_results}
\begin{tabular}{lccccccccc}
\toprule
Method & IA & Flu & Coh & ITR & RA & PC & KC & TC & Overall \\
\midrule
\multicolumn{10}{c}{\textit{Qwen3-VL-8B}} \\
\midrule
Base & 1.062 & 0.987 & 1.072 & 1.097 & 1.078 & 1.180 & 1.122 & 1.176 & 1.097 \\
SFT & \textbf{1.143} & \textbf{1.149} & 1.145 & 1.119 & 1.139 & 1.109 & 1.117 & 1.098 & 1.127 \\
MemVR & 1.090 & 1.001 & 1.099 & 1.133 & 1.107 & 1.218 & 1.145 & 1.213 & 1.126 \\
ClearSight & 1.097 & 1.011 & 1.102 & 1.132 & 1.099 & 1.219 & 1.146 & 1.220 & 1.128 \\
VCD & 1.109 & 1.031 & 1.120 & 1.154 & 1.135 & 1.229 & 1.171 & 1.233 & 1.148 \\
AIR & 1.087 & 1.016 & 1.102 & 1.132 & 1.096 & 1.204 & 1.132 & 1.199 & 1.121 \\
\rowcolor{gray!15} \textbf{\M (Ours)} & 1.122 & 1.062 & \textbf{1.152} & \textbf{1.208} & \textbf{1.149} & \textbf{1.248} & \textbf{1.204} & \textbf{1.241} & \textbf{1.173} \\
\midrule
\multicolumn{10}{c}{\textit{Gemma-3n-E4B}} \\
\midrule
Base & 1.043 & 0.990 & 1.074 & 1.086 & 1.059 & 1.137 & 1.079 & 1.138 & 1.076 \\
SFT & 1.112  & \textbf{1.116} & 1.120 & 1.104 & 1.109 & 1.123 & 1.108  & 1.104 & 1.112 \\
MemVR & 1.073 & 1.018 & 1.104 & 1.121 & 1.090 & 1.169 & 1.111 & 1.169 & 1.107 \\
ClearSight & 1.098 & 1.020 & 1.110 & 1.115 & 1.108 & 1.176 & 1.120 & 1.173 & 1.115 \\
VCD & 1.104 & 1.048 & 1.129 & 1.137 & 1.122 & 1.204 & 1.137 & 1.191 & 1.134 \\
AIR & 1.073 & 1.018 & 1.104 & 1.121 & 1.090 & 1.169 & 1.111 & 1.169 & 1.107 \\
\rowcolor{gray!15} \textbf{\M (Ours)} & \textbf{1.135} & 1.066 & \textbf{1.156} & \textbf{1.195} & \textbf{1.151} & \textbf{1.217} & \textbf{1.179} & \textbf{1.209} & \textbf{1.164} \\
\midrule
\multicolumn{10}{c}{\textit{Closed large MLLMs}} \\
\midrule
GPT-5 & 1.120 & 1.094 & 1.142 & 1.157 & 1.163 & 1.232 & 1.172 & 1.227 & 1.163 \\
Gemini-Pro-3 & 1.097 & 1.103 & 1.120 & 1.105 & 1.119 & 1.163 & 1.116 & 1.162 & 1.123 \\
Claude-Opus-4 & {1.199} & {1.142} & {1.216} & {1.238} & {1.223} & {1.296} & {1.249} & {1.288} & {1.231} \\
\bottomrule
\end{tabular}
\end{table}

\paragraph{Out-of-domain Results.}
Table~\ref{tab:out_test_main_results} illustrates \textbf{\M}'s strong generalization to unseen characters. On out-of-domain data, existing fine-tuned models and training-free baselines struggle to achieve high role consistency, indicating that their static visual corrections or dataset-specific priors are insufficient for entirely new personas. Conversely, \textbf{\M} maintains its robust superiority across overall coherency and consistency metrics. This sustained top-tier performance proves that our dynamic cognitive filter effectively transfers the generalized ``character lens'' to novel roles in a zero-shot manner, consistently outperforming other open-weight alternatives.

\begin{table}[t]
\centering
\scriptsize
\caption{Main MMRole out-of-domain results.}
\label{tab:out_test_main_results}
\begin{tabular}{lccccccccc}
\toprule
Method & IA & Flu & Coh & ITR & RA & PC & KC & TC & Overall \\
\midrule
\multicolumn{10}{c}{\textit{Qwen3-VL-8B}} \\
\midrule
Base & 1.039 & 0.960 & 1.042 & 1.078 & 1.048 & 1.155 & 1.097 & 1.144 & 1.070 \\
SFT & \textbf{1.136} & \textbf{1.146} & 1.141 & 1.123 & 1.125 & 1.078 & 1.109 & 1.050 & 1.114 \\
MemVR & 1.066 & 0.992 & 1.071 & 1.118 & 1.094 & 1.213 & 1.138 & 1.197 & 1.111 \\
ClearSight & 1.056 & 0.998 & 1.073 & 1.113 & 1.078 & 1.200 & 1.134 & 1.204 & 1.107 \\
VCD & 1.095 & 1.027 & 1.120 & 1.135 & 1.113 & 1.206 & 1.152 & 1.206 & 1.132 \\
AIR & 1.085 & 1.004 & 1.100 & 1.140 & 1.114 & 1.204 & 1.146 & 1.195 & 1.124 \\
\rowcolor{gray!15} \textbf{\M (Ours)} & 1.111 & 1.047 & \textbf{1.143} & \textbf{1.178} & \textbf{1.126} & \textbf{1.227} & \textbf{1.177} & \textbf{1.221} & \textbf{1.154} \\
\midrule
\multicolumn{10}{c}{\textit{Gemma-3n-E4B}} \\
\midrule
Base & 1.020 & 0.996 & 1.063 & 1.070 & 1.050 & 1.118 & 1.063 & 1.109 & 1.061 \\
SFT & \textbf{1.120} & \textbf{1.121} & 1.118 & 1.110 & 1.111 & 1.088 & 1.104 & 1.071 & 1.105 \\
MemVR & 1.073 & 1.050 & 1.116 & 1.124 & 1.107 & 1.170 & 1.125 & 1.174 & 1.117 \\
ClearSight & 1.075 & 1.018 & 1.108 & 1.117 & 1.104 & 1.177 & 1.120 & 1.167 & 1.111 \\
VCD & 1.095 & 1.034 & 1.126 & 1.129 & 1.110 & 1.179 & 1.127 & \textbf{1.197} & 1.125 \\
AIR & 1.073 & 1.050 & 1.116 & 1.124 & 1.107 & 1.170 & 1.125 & 1.174 & 1.117 \\
\rowcolor{gray!15} \textbf{\M (Ours)} & 1.086 & 1.047 & \textbf{1.143} & \textbf{1.157} & \textbf{1.132} & \textbf{1.200} & \textbf{1.152} & 1.191 & \textbf{1.139} \\
\midrule
\multicolumn{10}{c}{\textit{Closed large MLLMs}} \\
\midrule
GPT-5 & 1.092 & 1.093 & 1.136 & 1.146 & 1.127 & 1.205 & 1.139 & 1.201 & 1.142 \\
Gemini-Pro-3 & 1.076 & 1.103 & 1.116 & 1.089 & 1.082 & 1.087 & 1.084 & 1.093 & 1.091 \\
Claude-Opus-4 & {1.164} & {1.120} & {1.195} & {1.201} & {1.185} & {1.239} & {1.189} & {1.246} & {1.192} \\
\bottomrule
\end{tabular}
\end{table}

\subsection{Ablation Studies}

\begin{wraptable}{R}{0.45\textwidth} 
    \centering
    \footnotesize
    \caption{Module ablations on Qwen3-VL-8B-Instruct.}
    \label{tab:qwen8b_ablation}
    \resizebox{\linewidth}{!}{
    \begin{tabular}{l r@{\hspace{0.5ex}}l r@{\hspace{0.5ex}}l}
        \toprule
        Variant & \multicolumn{2}{c}{In-domain} & \multicolumn{2}{c}{Out-of-domain} \\
        \midrule
        CAVI & 1.1730 & & 1.1537 & \\
        \quad\textit{w/o} CTP & 1.1623 & (\textcolor{red}{$\downarrow$1.1\%}) & 1.1455 & (\textcolor{red}{$\downarrow$0.8\%}) \\
        \quad\textit{w/o} OFM & 1.1585 & (\textcolor{red}{$\downarrow$1.5\%}) & 1.1357 & (\textcolor{red}{$\downarrow$1.8\%}) \\
        \quad\textit{w/o} MARS & 1.1659 & (\textcolor{red}{$\downarrow$0.7\%}) & 1.1320 & (\textcolor{red}{$\downarrow$2.2\%}) \\
        \quad\textit{w/o} Re-Inject & 1.1639 & (\textcolor{red}{$\downarrow$0.9\%}) & 1.1381 & (\textcolor{red}{$\downarrow$1.6\%}) \\
        \bottomrule
    \end{tabular}}
\end{wraptable}

We ablate the three core modules by disabling CTP, OFM, and MARS independently. Table~\ref{tab:qwen8b_ablation} shows that removing any component hurts performance on both splits, confirming that the gains are not due to a single isolated trick. OFM produces the largest in-domain drop, suggesting that suppressing orthogonal visual noise is important even when character identities are familiar. On the out-of-domain split, removing MARS causes the largest degradation, followed by OFM and visual re-injection, indicating that adaptive role steering and purified visual memory become more important when the character-image pairing shifts. CTP has a smaller but consistent effect, which is expected because it mainly controls the quality of the visual evidence passed to later modules rather than directly changing the decoding state.

\begin{figure}[t]
    \centering
    \begin{minipage}{0.52\textwidth}
        \centering
        \includegraphics[width=\linewidth]{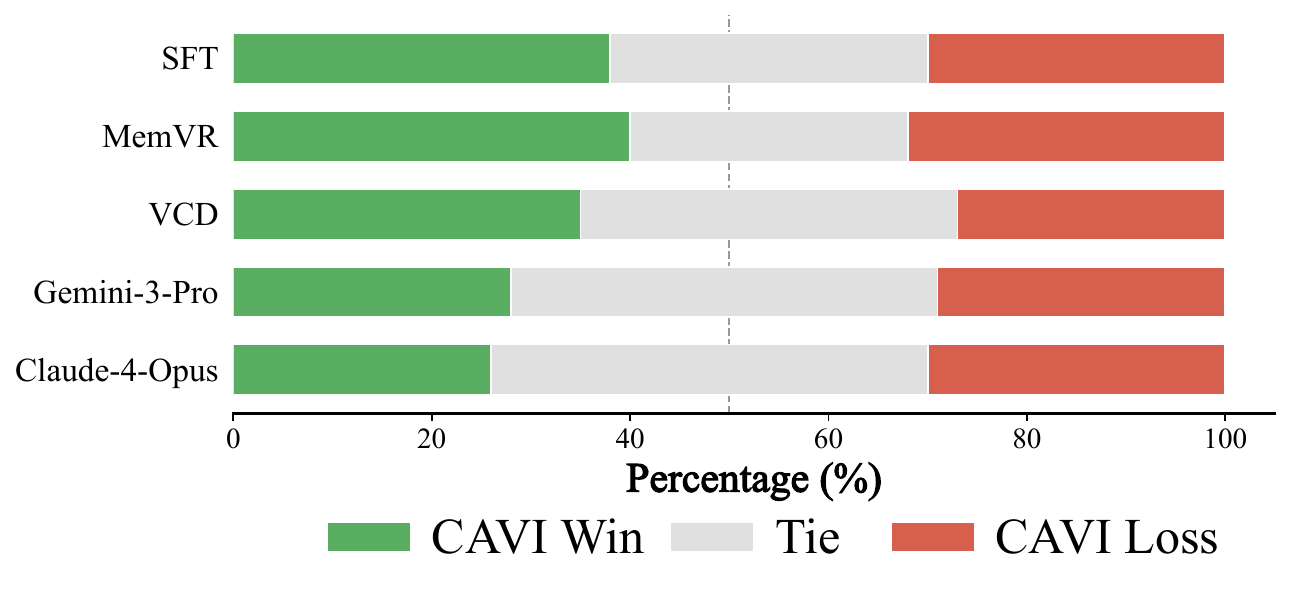}
        \caption{Human evaluation results.}
        \label{fig: human_eval}
    \end{minipage}\hfill 
    \begin{minipage}{0.42\textwidth}
        \centering
        \includegraphics[width=\linewidth]{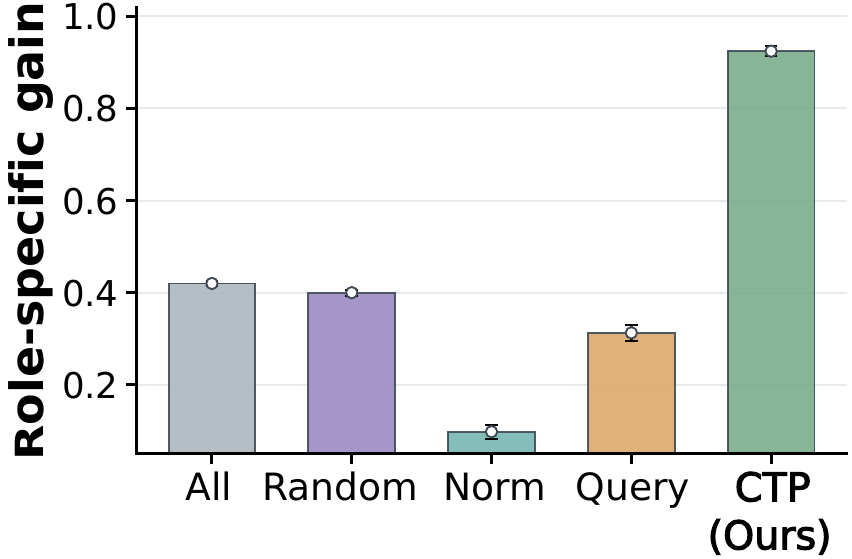}
        \caption{Role-specific visual gain of token selectors. The displayed score is per-example normalized.}
        \label{fig:ctp_resonance}
    \end{minipage}
\end{figure}

\subsection{Human Evaluation}
To validate real-world utility, we conduct a blinded, pairwise human evaluation involving 4 experts across 100 interaction sessions. As shown in Figure~\ref{fig: human_eval}, \textbf{\M} achieves leading performance, outperforming other methods under a strict criterion requiring superiority in both \textit{Character Visual Grounding} and \textit{Role Consistency}. Detailed analysis and evaluation protocols are provided in Appendix~\ref{app: human_eval}.

\subsection{Further Analysis}
\label{sec:analysis}



\paragraph{Character-Guided Token Pruning.}
CTP should retain visual evidence that is compatible with the current character identity rather than preserving a generic visual field. We therefore compute role-specific resonance, namely the cosine similarity between selected visual tokens and the role anchor minus their similarity to the user-query anchor. We compare CTP against using all tokens, a random subset with the same cardinality, a norm-based salience selector, and a query-relevance selector. As illustrated in Figure~\ref{fig:ctp_resonance}, CTP obtains the strongest role-specific gain, indicating that it does not merely prune visual tokens but preferentially keeps character-relevant evidence.


\begin{figure*}[t]
    \centering
    \begin{minipage}{0.29\textwidth}
        \centering
        \includegraphics[width=\linewidth]{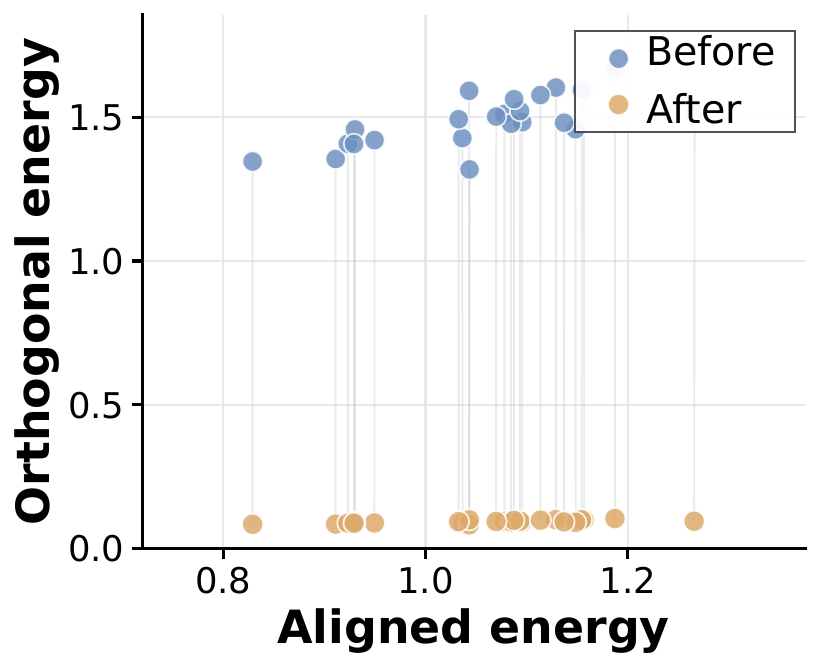}
    \end{minipage}
    \hfill
    \begin{minipage}{0.32\textwidth}
        \centering
        \includegraphics[width=\linewidth]{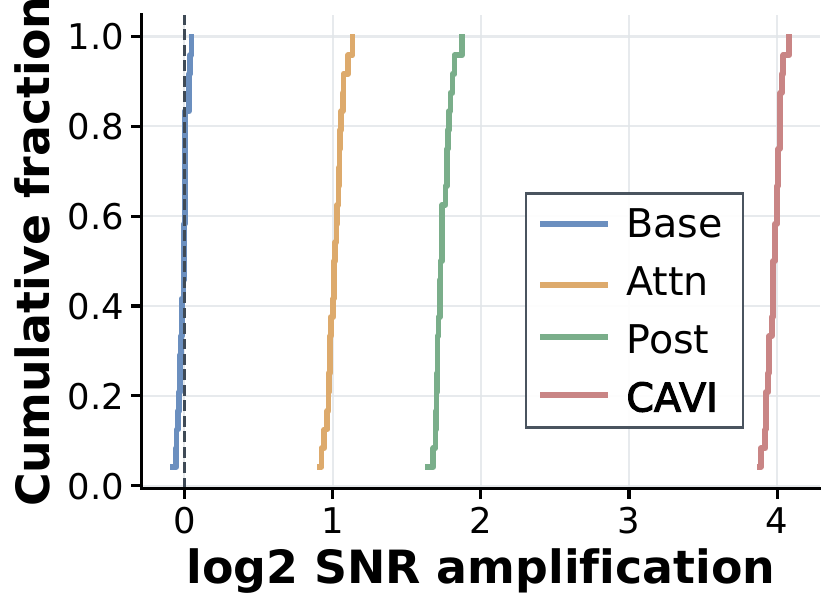}
    \end{minipage}
    \hfill
    \begin{minipage}{0.32\textwidth}
        \centering
        \includegraphics[width=\linewidth]{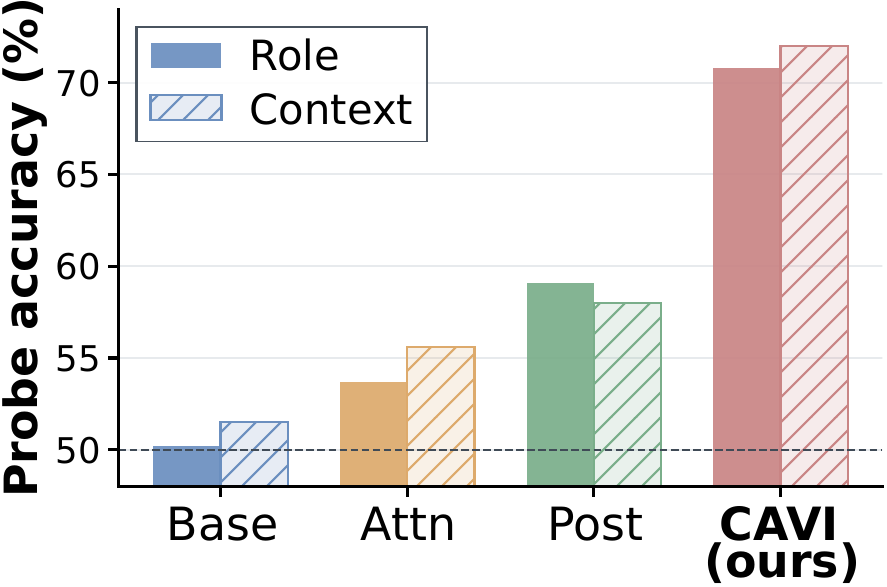}
    \end{minipage}
    \caption{OFM diagnostics. Left: aligned-versus-orthogonal energy before and after modulation. Middle: empirical distribution of log-scale SNR amplification. Right: probe separability for role and context categories.}
    \label{fig:ofm_analysis}
\end{figure*}

\begin{figure}[t]
    \centering
    \subfloat[Base \label{fig:base_tsne}]{
        \includegraphics[width=0.32\linewidth]{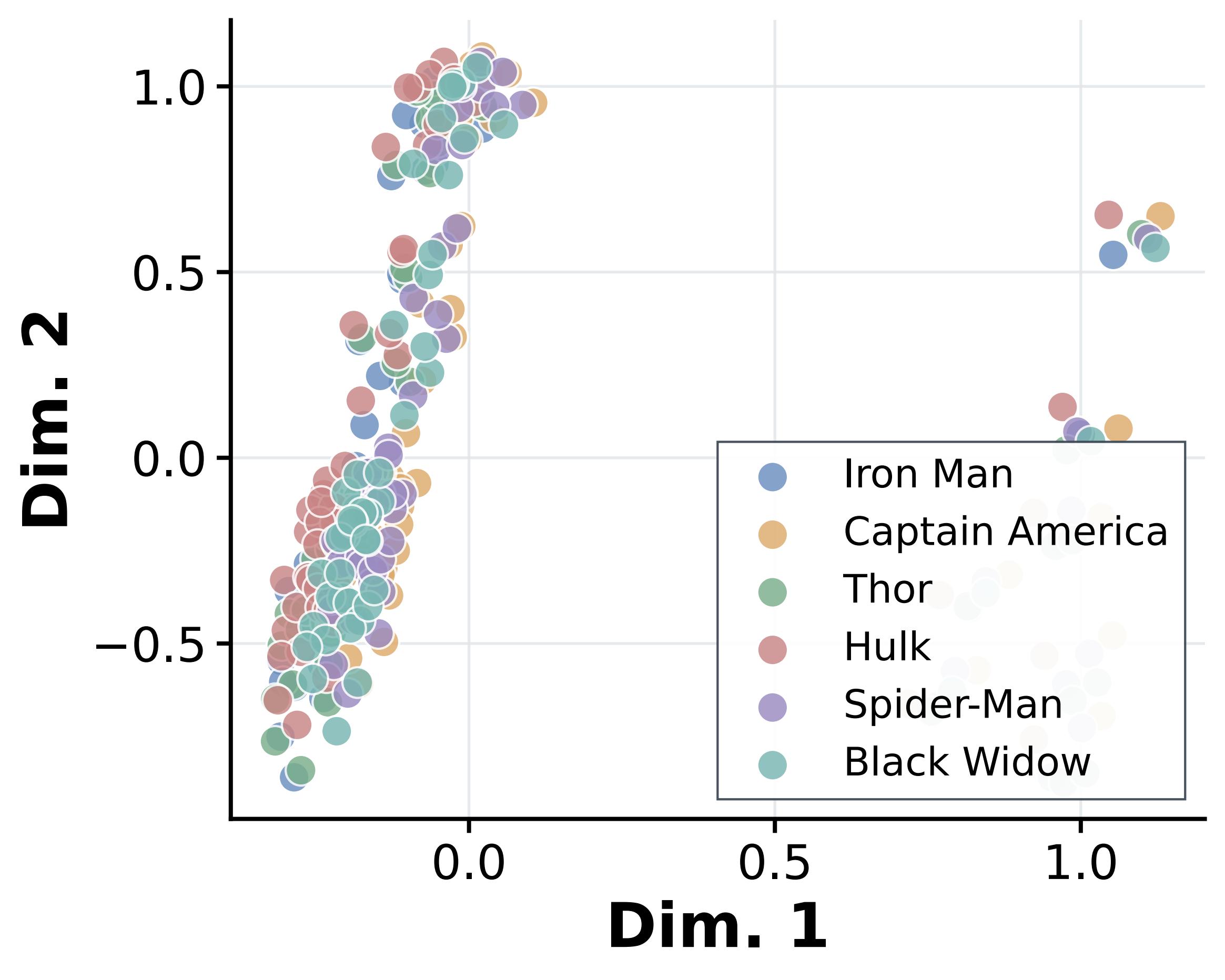}
    }
    \hfill
    \subfloat[VCD \label{fig:vcd_tsne}]{
        \includegraphics[width=0.32\linewidth]{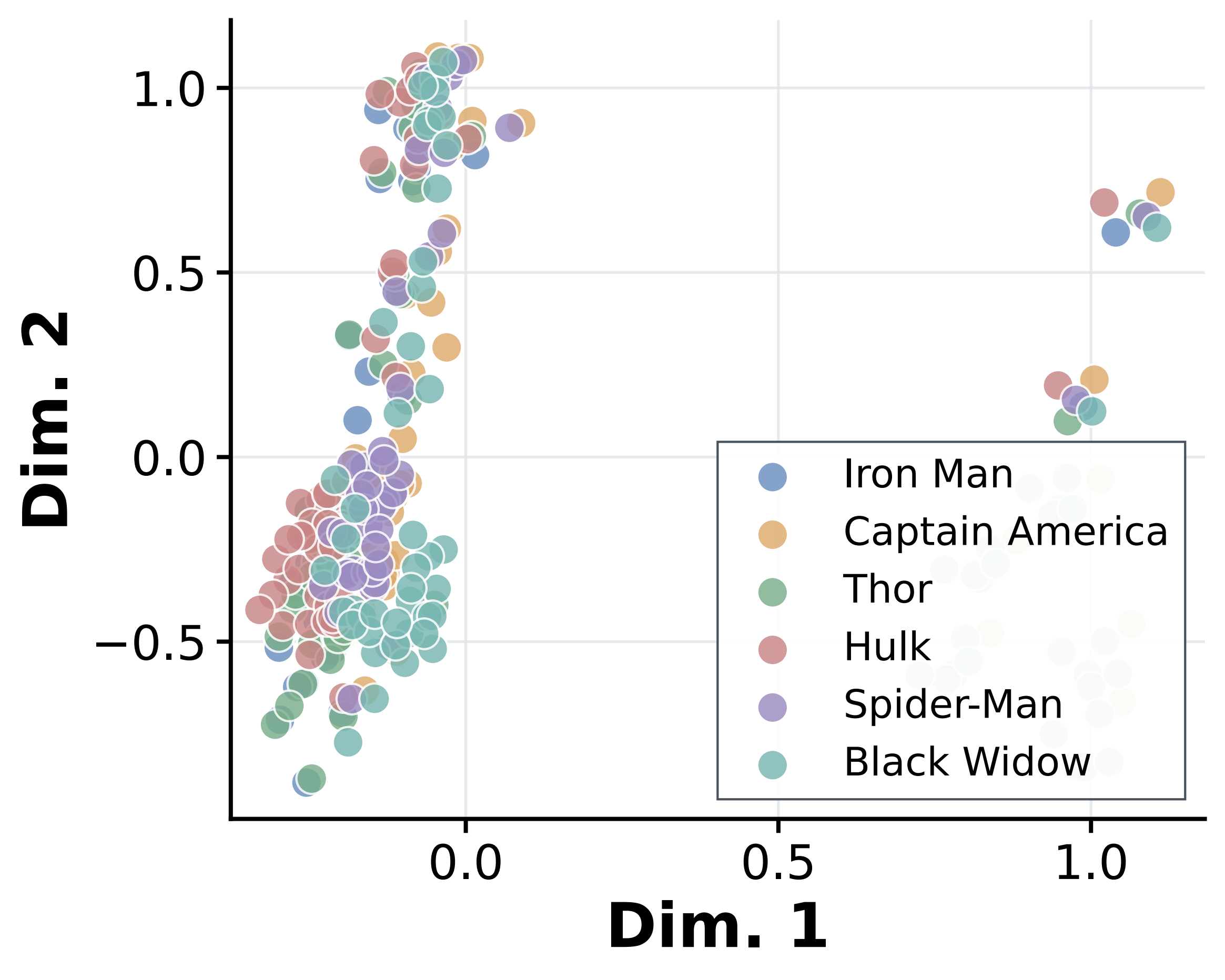}
    }
    \hfill
    \subfloat[\textbf{\M (Ours)} \label{fig:cavi_tsne}]{
        \includegraphics[width=0.32\linewidth]{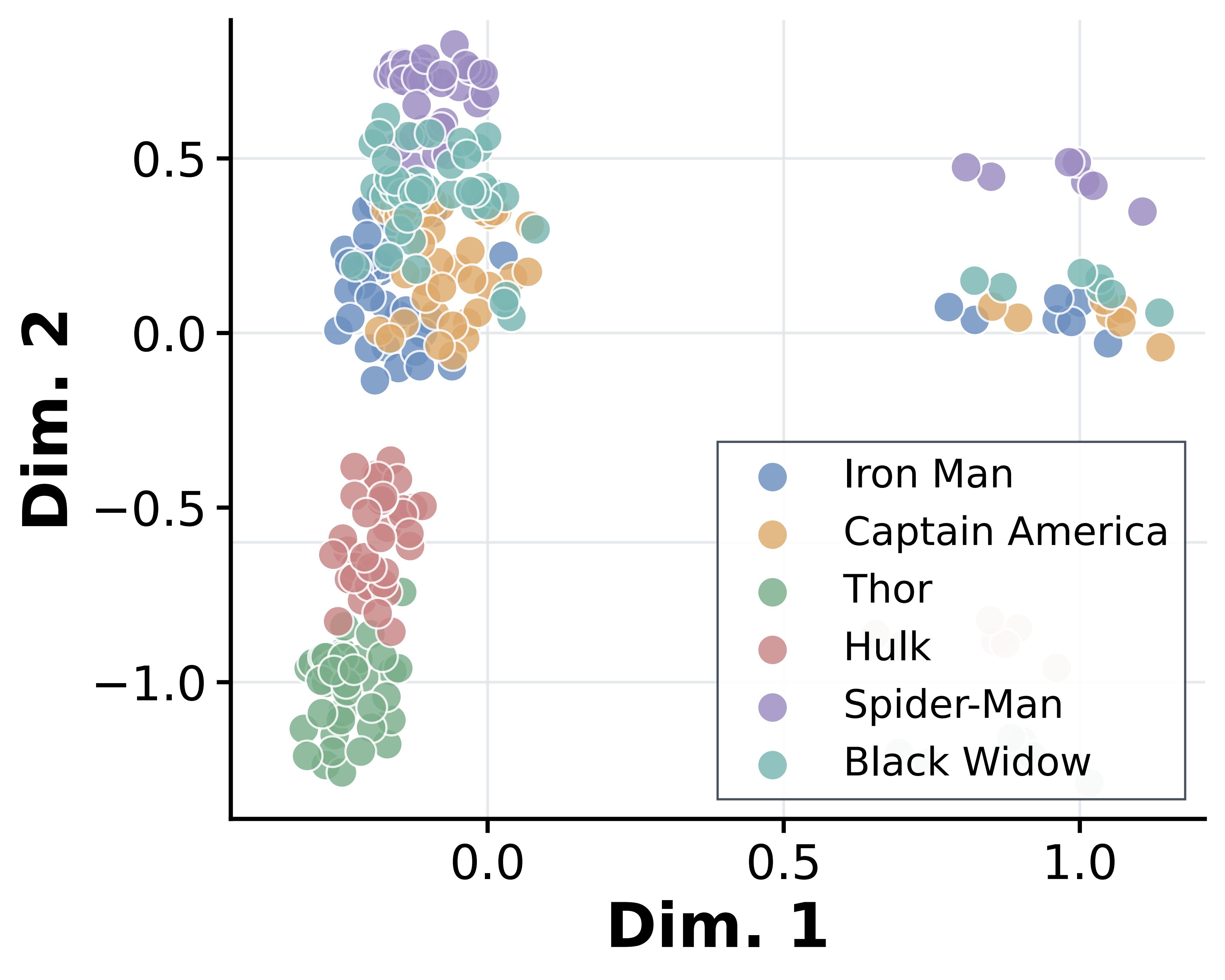}
    }
    \caption{Controlled role-swap t-SNE for hidden states of Base, VCD, and CAVI across 6 roles.}
    \label{fig:controlled_role_tsne}
\end{figure}

\begin{figure*}[t]
    \centering
    \begin{minipage}{0.195\textwidth}
        \centering
        \includegraphics[width=\linewidth]{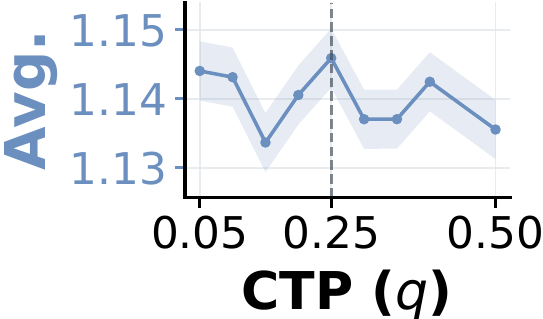}
    \end{minipage}
    \begin{minipage}{0.195\textwidth}
        \centering
        \includegraphics[width=\linewidth]{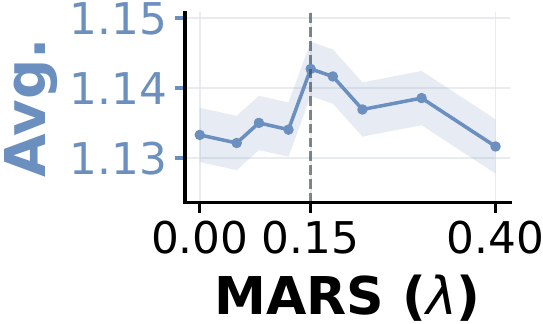}
    \end{minipage}
    \begin{minipage}{0.195\textwidth}
        \centering
        \includegraphics[width=\linewidth]{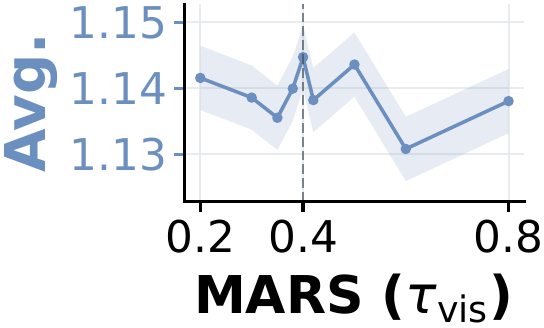}
    \end{minipage}
    \begin{minipage}{0.195\textwidth}
        \centering
        \includegraphics[width=\linewidth]{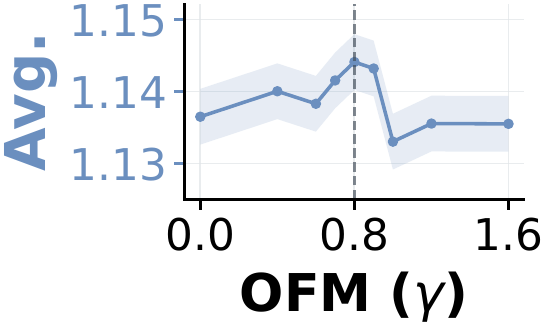}
    \end{minipage}
    \begin{minipage}{0.195\textwidth}
        \centering
        \includegraphics[width=\linewidth]{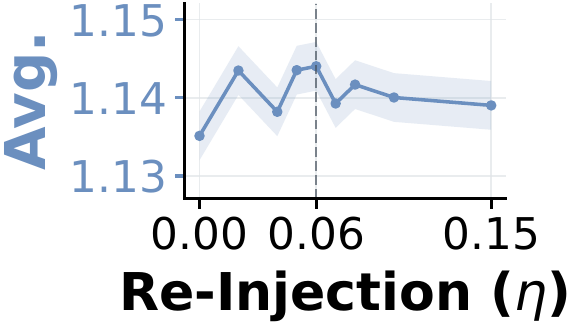}
    \end{minipage}
    \caption{Hyperparameter sensitivity evaluated on 100 sampled in-domain test examples. From left to right: CTP budget $q$, MARS base coefficient $\lambda$, MARS saturation margin $\tau_{\mathrm{vis}}$, OFM noise leakage $\gamma$, and visual re-injection strength $\eta$. Dashed lines mark the default configurations independently tuned on 100 training samples in the main result.}
    \label{fig:hyperpara}
\end{figure*}

\paragraph{Orthogonal Feature Modulation.}

OFM is designed to suppress the out-of-character component inside retained visual token features while preserving the component aligned with the character-context subspace. Figure~\ref{fig:ofm_analysis} evaluates this from three angles. The left panel compares aligned and orthogonal energy before and after OFM, showing that the gain is not a scalar rescaling artifact, as OFM selectively reduces orthogonal energy while retaining the aligned counterpart. The middle panel plots the empirical distribution of log-scale Signal-to-Noise Ratio (SNR) amplification, where the rightward shift of the curves indicates that the aligned component becomes increasingly dominant. The right panel reports probe accuracy for role and context categories. While the original representation (Base) stays near chance, features become progressively more separable following visual token selection (Attn), OFM (Post), and the full CAVI pipeline.

\paragraph{Modality-Adaptive Role Steering.}
MARS should make character identity more accessible during decoding. We build a controlled role-swap set by crossing 50 MMRole image-question pairs with six role prompts, yielding 300 visual-conditioned hidden-state points per method. As illustrated in Figure~\ref{fig:controlled_role_tsne}, the baselines show weak role structure but are still largely organized by shared visual-question geometry, while CAVI forms clearer role-conditioned neighborhoods without removing input-specific variation. This supports the intended behavior of MARS, role identity becomes more explicit in the decoding state.



\begin{table}[t] 
    \centering
    \footnotesize

    \begin{minipage}[t]{0.46\linewidth} 
        \centering
        \renewcommand{\arraystretch}{1.4} 
        \caption{Average metrics for different layer ranges.}
        \label{tab:layer_stats}
        \begin{tabular}{lc | @{\hspace{4ex}} lc} 
            \toprule
            $\mathcal{L}_{steer}$ & Overall $\uparrow$  & $\mathcal{L}_{steer}$ & Overall $\uparrow$\\
            \midrule
            16-32 & 1.1400 & 24-32          & 1.1356  \\
            18-32 & 1.1417 & \textbf{26-32} & \textbf{1.1423} \\
            20-32 & 1.1289 & 28-32          & 1.1321 \\
            22-32 & 1.1343 & 30-32          & 1.1297 \\
            \bottomrule
        \end{tabular}
    \end{minipage}
    \hfill 
    \begin{minipage}[t]{0.52\linewidth}
        \centering
        \renewcommand{\arraystretch}{1.1}
        \caption{Inference efficiency for Qwen3-VL-8B-Instruct.}
        \label{tab:runtime_perf}
        \resizebox{\linewidth}{!}{
        \begin{tabular}{lrrrr}
            \toprule
            Method & Tok./s $\uparrow$ & Sec./sample $\downarrow$ & Peak CUDA GB $\downarrow$ \\
            \midrule
            Base       & 13.77 & 27.28 & 19.22 \\
            VCD        & 6.03  & 59.11 & 24.87 \\
            MemVR      & 11.44 & 31.43 & 20.91 \\
            ClearSight & 11.19 & 32.52 & 21.30 \\
            AIR        & 12.46 & 30.70 & 20.08 \\
            \textbf{CAVI (Ours)}       & 12.74 & 28.36 & 20.79 \\
            \bottomrule
        \end{tabular}}
    \end{minipage}
\end{table}


\paragraph{Hyperparameter Sensitivity.}
As shown in Figure~\ref{fig:hyperpara}, the hyperparameters generally exhibit an inverted-U trend, reflecting the underlying information bottleneck trade-off. Overly aggressive interventions (e.g., an excessively low CTP budget $q$ or high MARS coefficient $\lambda$) discard critical visual facts or indiscriminately overwrite visual grounding, risking hallucinations. Conversely, conservative settings fail to adequately filter Out-Of-Character noise or correct role drift. Furthermore, Table~\ref{tab:layer_stats} demonstrates that confining the steering and re-injection intervention ($\mathcal{L}_{steer}$) to deeper layers (specifically 26--32) yields the optimal overall performance. This aligns precisely with the representational dynamics of MLLMs: early layers are best preserved for low-level cross-modal alignment and foundational visual perception, whereas deeper layers govern high-level semantic synthesis and persona realization. Therefore, mid-to-late layers serve as the optimal locus for explicit cognitive role steering without disrupting fundamental visual integrity.

\paragraph{Efficiency Analysis.}
Table~\ref{tab:runtime_perf} compares the inference efficiency of various methods on Qwen3-VL-8B-Instruct using a single GPU. Since \textbf{\M} relies purely on lightweight closed-form subspace projections and dynamic activation steering, it introduces minimal computational overhead. Compared to the original Base model, \textbf{\M} maintains a highly competitive generation throughput and incurs a marginal peak memory increase. In stark contrast, contrastive decoding interventions like VCD suffer a massive 56\% throughput degradation and high memory inflation due to the heavy burden of dual-branch forward passes. This explicitly indicates that \textbf{\M} provides substantial role-alignment gains with negligible latency, remaining highly practical and scalable for real-time deployment.

\subsection{Case Study}

To qualitatively illustrate the capabilities of \M, we present dialogue examples in Table~\ref{tab:case_howard_child} and Table~\ref{tab:case_howard_clocks}. In the first example, Howard Wolowitz is asked to comment on a child eating in warm light. Base, VCD, and MemVR mostly describe a pleasant photograph or snack scene, while CAVI grounds the same evidence in Howard's aerospace-engineering and fatherhood persona, e.g., turning the snack moment into an edible experiment and a NASA-like thermal image. In the second example, Sheldon praises antique clocks as precise mechanical systems. Generic visual baselines either dismiss the clocks as decorative objects or fall back to surface-level banter, whereas CAVI first acknowledges concrete visual cues--the octagonal face, wood grain, Roman numerals, cuckoo clock, and pendulum--and then responds through Howard's engineering worldview. These cases illustrate that CAVI does not merely intensify role style; it uses role-conditioned visual evidence to keep answers both image-grounded and persona-consistent.

\section{Conclusion}
In this paper, we formalize Modality-Role Interference (MRI) as the fundamental cognitive mismatch between objective visual perception and subjective character personas in MLLMs. To resolve this, we introduce Character-Aware Visual Intervention (\textbf{CAVI}), a training-free framework that enables agents to perceive environments \textit{``Through the Lens of Character.''} By functioning as an information bottleneck, \textbf{CAVI} systematically purifies role-consistent visual facts from massive visual inputs. Extensive experiments validate that \textbf{CAVI} effectively mitigates Modality-Role Interference, seamlessly harmonizing what the model objectively sees with who the character subjectively is. Ultimately, our findings suggest that deterministic, inference-time feature modulation offers a highly viable and lightweight pathway for complex multimodal persona alignment.



\medskip

{
\small

\bibliographystyle{unsrtnat}
\bibliography{ref}
}


\appendix

\appendix

\section{Theoretical Analysis and Proofs}
\label{app:theoretical_proofs}

This appendix provides the formal information-theoretic context, propositions, geometric assumptions, and detailed proofs supporting the methodology presented in Section~\ref{sec:method}.


\subsection{Ineffectiveness of Scalar Attention for Feature Disentanglement}
\label{app:prop1}

\begin{proposition}[Invariant Noise Leakage in Scalar Attention Weighting]
\label{prop:noise_leakage}
Let $\hat{z} \in \mathbb{R}^d$ be a visual token decomposed into an orthogonal direct sum $\hat{z} = z^{role} + z^{noise}$, where $\langle z^{role}, z^{noise} \rangle = 0$ and $\|z^{noise}\|_2 > 0$. We define the intra-token Signal-to-Noise Ratio (SNR) as $\text{SNR}(\hat{z}) = \frac{\|z^{role}\|_2^2}{\|z^{noise}\|_2^2}$. 
Under standard attention mechanisms, the token is scaled by a weight $\alpha \in (0,1]$, yielding the context vector contribution $\tilde{z}_{attn} = \alpha z^{role} + \alpha z^{noise}$. The resulting SNR remains strictly invariant: $\text{SNR}(\tilde{z}_{attn}) = \text{SNR}(\hat{z})$.
\end{proposition}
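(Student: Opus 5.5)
The argument is a direct computation using only the homogeneity of the Euclidean norm. It needs no earlier result from the paper.

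First, we identify the signal and noise components of the scaled vector. Because $\tilde{z}_{attn} = \alpha z^{role} + \alpha z^{noise}$ and $\langle \alpha z^{role}, \alpha z^{noise} \rangle = \alpha^2 \langle z^{role}, z^{noise}\rangle = 0$, the scaled vector is again an orthogonal direct sum. Its role component is $\alpha z^{role}$ and its noise component is $\alpha z^{noise}$. The components are orthogonal and the role subspace is unchanged, so this decomposition is the one inherited from $\hat{z}$. Hence the SNR of $\tilde{z}_{attn}$ is well defined by the same formula as in the statement.

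Second, we check that the denominator is nonzero. Since $\alpha \in (0,1]$, we have $\alpha > 0$, so $\|\alpha z^{noise}\|_2 = \alpha \|z^{noise}\|_2 > 0$. This is the only place where the hypotheses $\alpha>0$ and $\|z^{noise}\|_2>0$ are used.

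Third, we compute the ratio:
\[
\text{SNR}(\tilde{z}_{attn}) = \frac{\|\alpha z^{role}\|_2^2}{\|\alpha z^{noise}\|_2^2} = \frac{\alpha^2 \|z^{role}\|_2^2}{\alpha^2 \|z^{noise}\|_2^2} = \text{SNR}(\hat{z}).
\]
The common factor $\alpha^2>0$ cancels, which gives the claimed invariance.

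There is no substantive obstacle. The only point that needs care is the first step: the SNR of $\tilde{z}_{attn}$ must be read with respect to the scaled components $\alpha z^{role}, \alpha z^{noise}$, and the orthogonality of these components must be verified. Otherwise the quantity is not well defined.

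We can also remark afterward that the argument extends to any positive scalar reweighting of a single token. This contrasts with OFM, where $\mathbf{P}_{CC}$ and $\gamma(\mathbf{I}-\mathbf{P}_{CC})$ act with different factors on different subspaces and therefore can change the ratio.
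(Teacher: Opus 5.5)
Your proof is correct and follows the paper's argument: both compute $\text{SNR}(\tilde{z}_{attn})$ directly and cancel the common factor $\alpha^2$. Your explicit checks that the scaled components remain orthogonal and that the denominator is nonzero are minor refinements the paper leaves implicit.
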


\textit{Proof of Proposition \ref{prop:noise_leakage}.} 
For scalar attention, the modulated SNR is computed as:
\begin{equation}
\text{SNR}(\tilde{z}_{attn}) = \frac{\|\alpha z^{role}\|_2^2}{\|\alpha z^{noise}\|_2^2} = \frac{\alpha^2 \|z^{role}\|_2^2}{\alpha^2 \|z^{noise}\|_2^2} = \frac{\|z^{role}\|_2^2}{\|z^{noise}\|_2^2} = \text{SNR}(\hat{z})
\end{equation}
To maintain necessary visual grounding and prevent object hallucinations, the model must assign an attention weight $\alpha > 0$. Consequently, the absolute noise leakage $\|\alpha z^{noise}\|_2 > 0$ strictly persists and is proportionally bounded by the factual signal. Therefore, scalar weighting is mathematically incapable of decoupling features at the sub-token level. $\hfill \blacksquare$

\subsection{Sufficiency of Orthogonal Feature Modulation (OFM)}
\label{app:prop2}

To mathematically guarantee the effectiveness of OFM, we introduce the following geometric assumptions common in high-dimensional latent space analysis.

\newtheorem{assumption}{Assumption}
\begin{assumption}[High-Dimensional Semantic Geometry]
\label{assum:geometry}
In the $d$-dimensional latent space of MLLMs ($d \gg 1$), we assume:
\begin{itemize}
    \item \textbf{Signal Alignment:} The character-consistent visual fact $z^{role}$ is semantically driven by $\mathbf{e}_c$ and $h_t^{(l)}$. Thus, it is substantially spanned by $\mathcal{V}_{CC}$, retaining most of its energy after projection: $\|\mathbf{P}_{CC} z^{role}\|_2^2 \ge (1-\delta)\|z^{role}\|_2^2$ for a small misalignment margin $\delta \in (0,1)$.
    \item \textbf{Noise Isotropy:} The OOC noise $z^{noise}$ represents context-independent generic visual attributes. It behaves as an isotropic random vector with respect to the specific 2D character-context subspace $\mathcal{V}_{CC}$.
\end{itemize}
\end{assumption}

\begin{proposition}[Sufficiency of Projection via Dimensionality Blessing]
\label{prop:sufficiency}
Under Assumption \ref{assum:geometry}, let $\gamma \to 0$. Defining the expected SNR as the ratio of signal energy to the expected noise energy, the orthogonal projection onto the rank-2 subspace $\mathcal{V}_{CC}$ mathematically amplifies the intra-token SNR by a factor proportional to $\frac{d}{2}$:
\begin{equation}
\mathbb{E}[\text{SNR}_{OFM}] \ge \frac{d}{2} (1-\delta) \cdot \text{SNR}(\hat{z}) \gg \text{SNR}(\hat{z})
\end{equation}
\end{proposition}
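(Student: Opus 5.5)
The plan is a direct computation of the post-projection energies of the two components, using linearity of $\mathbf{P}_{CC}$ and the two parts of Assumption~\ref{assum:geometry}. With $\gamma \to 0$, OFM reduces to $\tilde{z} = \mathbf{P}_{CC}\hat{z} = \mathbf{P}_{CC} z^{role} + \mathbf{P}_{CC} z^{noise}$. I will treat these two summands as the post-modulation signal and noise, mirroring the decomposition in Proposition~\ref{prop:noise_leakage}. The SNR is then defined, as the statement indicates, by
\[
\mathbb{E}[\text{SNR}_{OFM}] := \frac{\|\mathbf{P}_{CC} z^{role}\|_2^2}{\mathbb{E}\|\mathbf{P}_{CC} z^{noise}\|_2^2}.
\]
Here the expectation is taken only over the isotropic noise. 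The subspace $\mathcal{V}_{CC}$, and hence $\mathbf{P}_{CC}$, is treated as fixed, since it is determined by $\mathbf{e}_c$ and $h_t^{(l)}$.

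\textbf{Numerator.} This follows immediately from the Signal Alignment assumption, which gives $\|\mathbf{P}_{CC} z^{role}\|_2^2 \ge (1-\delta)\|z^{role}\|_2^2$.

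\textbf{Denominator.} I would formalize Noise Isotropy as $\mathbb{E}[z^{noise} z^{noise\top}] = \frac{\sigma^2}{d}\mathbf{I}$, with $\sigma^2 = \mathbb{E}\|z^{noise}\|_2^2$. For example, $z^{noise} = \|z^{noise}\|_2\, g$ with $g$ uniform on the sphere and independent of $\mathbf{P}_{CC}$ works. Then
\[
\mathbb{E}\|\mathbf{P}_{CC} z^{noise}\|_2^2 = \mathrm{tr}\big(\mathbf{P}_{CC}\,\mathbb{E}[z^{noise}z^{noise\top}]\big) = \frac{\sigma^2}{d}\,\mathrm{tr}(\mathbf{P}_{CC}) = \frac{2}{d}\,\sigma^2.
\]
This uses $\mathbf{P}_{CC}^\top\mathbf{P}_{CC} = \mathbf{P}_{CC}$ and $\mathrm{rank}(\mathbf{P}_{CC}) = 2$. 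The rank is exactly 2 because the Gram--Schmidt construction requires $v_2 \neq 0$; I will note this as a standing nondegeneracy condition, and if it fails the rank is 1 and the bound only improves.

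\textbf{Combining.} Interpreting the pre-modulation SNR with the same expected-energy convention, $\text{SNR}(\hat{z}) = \|z^{role}\|_2^2/\sigma^2$, the ratio gives
\[
\mathbb{E}[\text{SNR}_{OFM}] \ge \frac{d}{2}(1-\delta)\,\text{SNR}(\hat{z}).
\]
The final $\gg$ follows from $d \gg 1$ and $\delta$ small, so that $\frac{d}{2}(1-\delta) \gg 1$. If one wants to keep $\gamma > 0$ rather than take the limit, the denominator becomes $\frac{2}{d}\sigma^2 + \gamma^2\frac{d-2}{d}\sigma^2$, plus a small signal-leakage term in the numerator. This recovers the stated bound continuously as $\gamma \to 0$, and I would remark on it briefly.

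\textbf{Main obstacle.} The real difficulty is not computational but a matter of legitimacy. Two points need care.

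First, isotropy relative to $\mathcal{V}_{CC}$ requires $z^{noise}$ to be independent of $\mathbf{P}_{CC}$. This is delicate, because $\hat{z}$ can influence $h_t^{(l)}$ through attention, and it also sits uneasily with the orthogonality $\langle z^{role}, z^{noise}\rangle = 0$ imposed in Proposition~\ref{prop:noise_leakage}. I would handle this by stating explicitly that the expectation is conditional on $\mathcal{V}_{CC}$. I would also replace exact orthogonality by the isotropy model: orthogonality to a fixed vector only changes the trace computation from $d$ to $d-1$, which is negligible and in fact only helps.

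Second, the expectation of a ratio is replaced by a ratio of expectations. I would make this explicit as the definition used in the statement. Alternatively, I would note that Jensen's inequality, applied to $1/x$, gives $\mathbb{E}[1/X] \ge 1/\mathbb{E}[X]$, so the bound persists for the genuine expected ratio whenever the latter is finite.
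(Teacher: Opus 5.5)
Your proposal is correct and is essentially the paper's proof. You bound the numerator by Signal Alignment and compute the denominator as $\mathbb{E}\|\mathbf{P}_{CC}z^{noise}\|_2^2 = \mathrm{Tr}(\mathbf{P}_{CC}\Sigma) = \frac{2}{d}\,\mathbb{E}\|z^{noise}\|_2^2$, and you read $\text{SNR}(\hat{z})$ as $\|z^{role}\|_2^2/\mathbb{E}\|z^{noise}\|_2^2$, which the paper also does tacitly in its final equality.

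Your extra remarks (conditioning on $\mathcal{V}_{CC}$, the Jensen step, the $\gamma>0$ denominator) are sound refinements the paper omits. One small correction to the orthogonality remark: the gain there comes from the trace dropping to $2-\|\mathbf{P}_{CC}z^{role}\|_2^2/\|z^{role}\|_2^2$, since the change of normalization from $d$ to $d-1$ by itself slightly worsens the bound.
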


\textit{Proof of Proposition \ref{prop:sufficiency}.} 
The retained feature component after orthogonal projection is $\tilde{z} = \mathbf{P}_{CC} z^{role} + \mathbf{P}_{CC} z^{noise}$. Based on Assumption \ref{assum:geometry}, since $z^{noise}$ is isotropic in $\mathbb{R}^d$, its expected covariance matrix is $\Sigma = \frac{\mathbb{E}[\|z^{noise}\|_2^2]}{d} \mathbf{I}$.
When projecting this isotropic noise onto the rank-2 character-context subspace $\mathcal{V}_{CC}$, the expected squared norm of the noise residual can be determined using the properties of the trace operator:
\begin{equation}
\mathbb{E}[\|\mathbf{P}_{CC} z^{noise}\|_2^2] = \text{Tr}\big(\mathbf{P}_{CC} \Sigma\big) = \frac{\mathbb{E}[\|z^{noise}\|_2^2]}{d} \text{Tr}(\mathbf{P}_{CC})
\end{equation}
Since $\mathbf{P}_{CC}$ is an orthogonal projection matrix onto a 2-dimensional subspace, its trace equals its rank: $\text{Tr}(\mathbf{P}_{CC}) = 2$. Thus:
\begin{equation}
\mathbb{E}[\|\mathbf{P}_{CC} z^{noise}\|_2^2] = \frac{2}{d} \mathbb{E}[\|z^{noise}\|_2^2]
\end{equation}
Substituting the bounded signal energy (which retains at least $(1-\delta)$ of its original energy) and the heavily dimensionally-attenuated noise energy into the expected SNR formulation yields:
\begin{equation}
\mathbb{E}[\text{SNR}_{OFM}] = \frac{\|\mathbf{P}_{CC} z^{role}\|_2^2}{\mathbb{E}[\|\mathbf{P}_{CC} z^{noise}\|_2^2]} \ge \frac{(1-\delta)\|z^{role}\|_2^2}{\frac{2}{d} \mathbb{E}[\|z^{noise}\|_2^2]} = \frac{d}{2} (1-\delta) \text{SNR}(\hat{z})
\end{equation}
$\hfill \blacksquare$

\textbf{Quantitative Effectiveness:} In modern MLLMs (e.g., LLaMA-based architectures), the hidden dimension is typically $d \ge 4096$. Consequently, the geometric amplification factor is $\frac{d}{2} \ge 2048 \gg 1$. Even if the factual signal is heavily misaligned ($\delta = 0.5$, meaning half the signal is lost), projecting it onto $\mathcal{V}_{CC}$ still physically crushes the noise variance by orders of magnitude, effectively amplifying the expected SNR by over $1000\times$.

\subsection{Theoretical Justification of Modality-Adaptive Role Steering (MARS)}
\label{app:thm1}

\begin{proposition}[Risk Minimization Formulation]
\label{prop:mars_optimality}
During auto-regressive decoding, the agent faces a trade-off at layer $l$: minimizing the Character Drift Risk $\mathcal{R}_{drift}(\alpha) = \frac{1}{2}(\lambda - \alpha)^2$ versus minimizing the Visual Hallucination Risk $\mathcal{R}_{hal}(\alpha) = \mu \alpha \omega_t^{(l)}$, where $\alpha \ge 0$ is the steering intensity and $\mu > 0$ is a trade-off penalty.
The proposed MARS steering strategy naturally emerges as the closed-form solution that minimizes the joint risk functional $\mathcal{J}(\alpha) = \mathcal{R}_{drift}(\alpha) + \mathcal{R}_{hal}(\alpha)$.
\end{proposition}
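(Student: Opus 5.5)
The plan is to treat $\mathcal{J}$ as a one-dimensional constrained convex program in $\alpha$ and solve it in closed form with the first-order (KKT) conditions, then match the solution to Eq.~\ref{eq:mars_steering} by identifying the saturation margin as $\tau_{vis} = \lambda/\mu$.

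\textbf{Step 1: convexity.} Write
\begin{equation*}
\mathcal{J}(\alpha) = \tfrac{1}{2}(\lambda-\alpha)^2 + \mu\,\omega_t^{(l)}\,\alpha .
\end{equation*}
Its second derivative is $1 > 0$, so $\mathcal{J}$ is strictly convex on the closed convex feasible set $\{\alpha \ge 0\}$. Hence $\mathcal{J}$ has a unique minimizer $\alpha^\star$.

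\textbf{Step 2: unconstrained stationary point.} Setting $\mathcal{J}'(\alpha) = \alpha - \lambda + \mu\,\omega_t^{(l)} = 0$ gives
\begin{equation*}
\alpha_0 = \lambda - \mu\,\omega_t^{(l)} .
\end{equation*}

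\textbf{Step 3: impose $\alpha \ge 0$.} Split into two cases.
\begin{itemize}
\item If $\alpha_0 \ge 0$, then $\alpha_0$ is feasible and, by strict convexity, it is the global minimizer.
\item If $\alpha_0 < 0$, then $\mathcal{J}'(\alpha) = \alpha - \alpha_0 > 0$ for all $\alpha \ge 0$. So $\mathcal{J}$ is increasing on the feasible set and the minimizer is $\alpha^\star = 0$. Equivalently, the KKT multiplier $\nu = \mu\,\omega_t^{(l)} - \lambda \ge 0$ certifies optimality at the boundary.
\end{itemize}
Together the two cases give $\alpha^\star = \max\bigl(0, \lambda - \mu\,\omega_t^{(l)}\bigr)$.

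\textbf{Step 4: match the MARS form.} Assuming $\lambda > 0$, factor out $\lambda$:
\begin{equation*}
\alpha^\star = \lambda \max\Bigl(0,\, 1 - \frac{\omega_t^{(l)}}{\lambda/\mu}\Bigr).
\end{equation*}
With $\tau_{vis} := \lambda/\mu$ this is exactly Eq.~\ref{eq:mars_steering}. The result also gives the hyperparameter a meaning: $\tau_{vis}$ is the visual-reliance level at which the marginal hallucination penalty $\mu\,\omega_t^{(l)}$ equals the full drift gradient $\lambda$ at $\alpha = 0$, so steering switches off at that point.

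\textbf{Main obstacle.} The calculus is routine. The only delicate point is the correspondence between the two parameterizations. The proposition is stated with a free penalty $\mu$, while Eq.~\ref{eq:mars_steering} uses $(\lambda, \tau_{vis})$, so I must state explicitly that every choice of $(\lambda, \mu)$ with $\lambda, \mu > 0$ corresponds to $\tau_{vis} = \lambda/\mu$, and vice versa. I must also note the degenerate case $\lambda = 0$, where $\alpha^\star = 0$ trivially. The claim of "exact closed-form solution" then holds relative to this chosen quadratic-plus-linear risk model, and I would word it that way.
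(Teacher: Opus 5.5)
Your proposal is correct and follows the paper's route: solve $\mathcal{J}'(\alpha)=0$ to get $\alpha=\lambda-\mu\omega_t^{(l)}$, set $\tau_{vis}=\lambda/\mu$, and clamp to $\alpha\ge 0$. Your strict-convexity case analysis (equivalently, the KKT multiplier) proves the clamp step, which the paper only asserts as ``projecting the unconstrained solution onto the feasible domain,'' and your explicit $\lambda>0$ caveat for the reparameterization is a worthwhile addition.
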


\textit{Derivation of Proposition \ref{prop:mars_optimality}.}
We aim to find the appropriate intensity $\alpha^*$ that minimizes the joint risk:
\begin{equation}
\alpha^* = \mathop{\arg\min}_{\alpha \ge 0} \left[ \frac{1}{2}(\lambda - \alpha)^2 + \mu \alpha \omega_t^{(l)} \right]
\end{equation}
Taking the first derivative of the unconstrained objective $\mathcal{J}(\alpha)$ with respect to $\alpha$ and setting it to zero yields:
\begin{equation}
\frac{\partial \mathcal{J}}{\partial \alpha} = -(\lambda - \alpha) + \mu \omega_t^{(l)} = 0 \quad \implies \quad \alpha = \lambda - \mu \omega_t^{(l)}
\end{equation}
By defining the saturation margin threshold as $\tau_{vis} = \frac{\lambda}{\mu}$ and substituting this back into the equation, we obtain:
\begin{equation}
\alpha = \lambda - \frac{\lambda}{\tau_{vis}} \omega_t^{(l)} = \lambda \left( 1 - \frac{\omega_t^{(l)}}{\tau_{vis}} \right)
\end{equation}
To satisfy the non-negative physical constraint $\alpha \ge 0$ (which prevents the model from inversely applying the role steering vector, thereby avoiding negative intervention effects), we project the unconstrained solution onto the feasible domain. This directly leads to the exact formulation of MARS presented in Eq.~\ref{eq:mars_steering}:
\begin{equation}
\alpha^* = \lambda \cdot \max \left(0, 1 - \frac{\omega_t^{(l)}}{\tau_{vis}} \right)
\end{equation}
This derivation demonstrates that the proposed MARS is not a mere empirical heuristic, but a closed-form analytical solution derived to balance the decoding trade-off. $\hfill \blacksquare$

\section{Limitations}
\label{app: limitations}
While our proposed \textbf{CAVI} method demonstrates highly promising results as an efficient intervention for Modality-Role Interference (MRI), several aspects present exciting opportunities for future exploration:

First, as a fully training-free and plug-and-play approach, the upper bound of \textbf{CAVI}'s performance naturally correlates with the inherent vision-language alignment and instruction-following capabilities of the underlying base MLLM. Consequently, future advancements in foundation models will seamlessly unlock even stronger out-of-the-box multimodal role-playing performance without requiring any structural changes to our method.

Second, the Character Anchor is currently constructed by mean-pooling the textual profile's hidden states. This design was deliberately chosen for its computational efficiency and robust stability, which has proven highly effective across standard role-playing benchmarks. Future work could explore multi-vector or hierarchical representations to capture the subtle semantic nuances of exceptionally long character bibles (e.g., full-novel contexts) while carefully balancing inference overhead.

Third, the Modality-Adaptive Role Steering (MARS) module utilizes a lightweight offline calibration step to extract the expected perturbation direction. This principled design explicitly guarantees zero added latency during auto-regressive decoding, making it highly practical for real-time applications. Nevertheless, investigating completely unsupervised, on-the-fly vector extraction mechanisms could serve as an elegant extension to further streamline deployment for unpredictable open-world visual scenarios.

\section{Ethical Statements}
\label{app: ethical_statements}
While any technology that enables the generation of highly engaging, visually-aware personas warrants thoughtful deployment, several factors inherent to our approach actively mitigate potential risks. Because \textbf{CAVI} is a strictly training-free, inference-time intervention, it does not permanently alter model weights nor require fine-tuning on potentially unvetted or sensitive character datasets. It simply modulates internal features to better align with the user's benign prompts.

Regarding broader societal considerations, a recognized nuance of highly consistent and immersive role-playing AI is the potential for users to form deep parasocial relationships or emotional attachments to virtual roles. As MLLMs become increasingly adept at generating realistic, visually grounded interactions, ensuring appropriate transparency—such as reminding users they are interacting with an AI—remains a responsible practice for downstream application developers utilizing such technologies.

Crucially, \textbf{CAVI} operates seamlessly within the existing forward pass without bypassing standard attention mechanisms. This means it inherently respects and preserves the safety alignment and refusal guardrails originally established by the base MLLM. If the underlying model is triggered by a malicious or inappropriate multimodal input, \textbf{CAVI} does not override these safety protocols, ensuring that the enhanced role-immersion remains strictly bound by established human-aligned safety standards.

\section{Additional Experimental Details}
\label{app:experimental_details}

\subsection{Experimental Setting Details}

\paragraph{MMRole metrics.}
Table~\ref{tab:mmrole_metrics} expands the metric abbreviations used in the main result tables. We group them by the behavior they measure, but report the official per-metric scores without changing the evaluator.

\begin{table}[t]
    \centering
    \caption{MMRole automatic evaluation metrics used in our main result tables.}
    \label{tab:mmrole_metrics}
    \resizebox{.98\textwidth}{!}{
    \begin{tabular}{lll}
        \toprule
        Group & Metric & What it measures \\
        \midrule
        Dialogue quality & IA & Whether the answer follows the requested role-playing instruction directly. \\
        Dialogue quality & Flu & Whether the response is grammatical and naturally phrased. \\
        Dialogue quality & Coh & Whether the response is internally consistent and dialogue-coherent. \\
        Multimodal grounding & ITR & Whether the response is connected to the image content. \\
        Multimodal grounding & RA & Whether the response answers the user or dialogue partner correctly. \\
        Role consistency & PC & Whether the response reflects the character's personality. \\
        Role consistency & KC & Whether the response respects character-specific knowledge and relations. \\
        Role consistency & TC & Whether the response matches the character's speaking style and tone. \\
        \bottomrule
    \end{tabular}}
\end{table}

\paragraph{Dataset statistics.}
Table~\ref{tab:mmrole_data_stats} reports the split-level scale of MMRole, and Table~\ref{tab:mmrole_scenario_stats} summarizes the three dialogue scenarios used to construct the benchmark.

\begin{table}[t]
    \centering
    \small
    \caption{Split-level statistics for MMRole. Character-related images contain explicit character evidence, while generic images provide broader visual contexts.}
    \label{tab:mmrole_data_stats}
    \begin{tabular}{lcccc}
        \toprule
        Statistic & Train & In-Test & Out-Test & Total \\
        \midrule
        Characters & \multicolumn{2}{c}{72}  & 13 & 85 \\
        Generic images & \multicolumn{2}{c}{10,800} & 39 & 10,839 \\
        Character-related images & \multicolumn{2}{c}{175} & 18 & 193 \\
        Dialogues & 14,052 & 216 & 78 & 14,346 \\
        Samples & 85,456 & 216 & 78 & 85,750 \\
        \bottomrule
    \end{tabular}
\end{table}

\begin{table}[t]
    \centering
    \small
    \caption{Scenario-level statistics for MMRole. Comment examples are single-turn image descriptions, while Human-Role and Inter-Role examples contain multi-turn dialogue.}
    \label{tab:mmrole_scenario_stats}
    \begin{tabular}{lrrrr}
        \toprule
        Statistic & Comment & Human-Role & Inter-Role & Total \\
        \midrule
        Dialogues & 4,893 & 4,617 & 4,836 & 14,346 \\
        Turns per dialogue & 1.00 & 5.80 & 5.75 & 4.15 \\
        Tokens per dialogue & 236.00 & 446.91 & 429.54 & 369.00 \\
        \bottomrule
    \end{tabular}
\end{table}

\paragraph{Hyperparameters.}
All experiments are implemented with PyTorch and HuggingFace Transformers on an NVIDIA H100 GPU. All hyperparameters of CAVI and baselines are tuned on 100 randomly sampled training examples and then fixed. Unless otherwise stated, we use steering layers $\{26,27,28,29,30,31,32\}$, CTP budget $q=0.25$, OFM strength $\gamma=0.8$, visual re-injection coefficient $\eta=0.06$, MARS gate coefficient $\lambda=0.15$, visual-trigger threshold $\tau_{\mathrm{vis}}=0.4$. 

\subsection{Details of the Human Evaluation}
\label{app: human_eval}

To complement our automated benchmarks with qualitative human assessment, we recruit a panel of four academic researchers specializing in multimodal learning and human-computer interaction. 
Prior to participation, all evaluators provide informed consent and receive remuneration at a rate of \$15 USD per hour. 
We construct an evaluation pool by randomly sampling 20 distinct personas from the MMRole dataset. 
The evaluators engage in 4-turn dialogue sessions with six distinct models per persona. 
To ensure an unbiased assessment, both the initial visual prompts and the conversational trajectories remain strictly controlled across all evaluated models.

During the evaluation phase, the annotators review a total of 100 dialogue comparisons. 
We adopt a randomized A/B testing framework wherein the origins of the generated texts are concealed from the raters to mitigate confirmation bias and presentation order effects. 
Furthermore, we establish a rigorous threshold for a model to achieve a ``Win'': it must strictly dominate the opposing baseline across two orthogonal dimensions simultaneously---namely, \textit{Character Visual Grounding} (faithfulness to the character-relevant image content without object hallucinations) and \textit{Role Consistency} (faithful adherence to the assigned character's tone, worldview, and subjective perspective). 
If a model exhibits superior factuality but deviates from the predefined persona, or vice versa, the comparison defaults to a tie. 

We measure inter-rater reliability on a holdout set, achieving a Fleiss' $\kappa$ score of $0.68$, which indicates substantial consensus among the experts. 
The aggregated preferences detailed in Figure~\ref{fig: human_eval} confirm that \textbf{CAVI} excels at integrating objective visual perception with vivid character emulation in open-ended deployment scenarios, effectively mitigating Modality-Role Interference.

\begin{table*}[p]
\centering
\scriptsize
\caption{Qualitative case study (Howard Wolowitz comments on a child eating in warm light). Role: Howard Wolowitz. Image: \texttt{COCO/train2017/000000471913.jpg}. User query: ``When you look at this image, what thoughts come to your mind?''.}
\label{tab:case_howard_child}
\begin{tabular}{p{0.18\linewidth}p{0.75\linewidth}}
\toprule
\textbf{Image} & \textbf{Prompt} \\
\midrule
\includegraphics[width=\linewidth,valign=t]{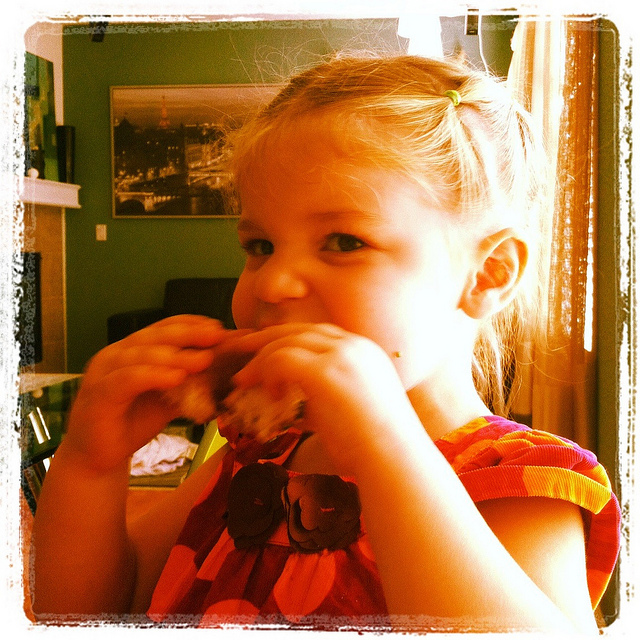}
& \textbf{Role}: Howard Wolowitz. \newline
\textbf{User}: When you look at this image, what thoughts come to your mind? \\
\midrule
\textbf{Method} & \textbf{Response excerpt} \\
\midrule
Base & Oh, wow, that’s a *beautiful* shot--definitely one of those “I’m gonna frame this and hang it in the hallway” kind of pictures. You can practically smell the cookies baking in the background, and that little munchkin? She’s got the kind of innocent, unfiltered joy that makes you want to reach through the screen and give her a hug. I mean, look at her--hair in a little ponytail, eyes wide with pure, unadulterated delight, and she’s got that “I’m about to devour this snack like it’s my last meal on Earth” look. ... \\
VCD & Oh, wow, that’s a beautiful shot--definitely a keeper. You can tell this little munchkin is in full “I’m the boss of this snack” mode. I mean, look at her--hair in a tiny ponytail, that dress? So colorful, like a rainbow explosion on her shoulders. And the lighting? Pure golden hour magic. It’s like the sun is giving her a personal spotlight. Makes me think of my daughter, Penny. She’s got that same kind of innocent, “I’m gonna eat this whole thing and then maybe I’ll share a bite with you” look. I’m not gonna ... \\
MemVR & Oh, wow, that’s a *beautiful* shot--definitely one of those “I’m gonna frame this and hang it in the hallway” kind of pictures. You can practically *feel* the warmth of that sunbeam hitting her face, right? It’s like she’s got a little halo of golden light around her head. And look at those eyes--so focused, so innocent, like she’s just discovered the secret to the universe\ldots{} or at least, the secret to the cookie she’s about to devour. I mean, I’m not gonna lie--I’m a dad now, and I’ve seen my fair share of “cookie ... \\
CAVI & Man, look at this little munchkin--she’s got that look of pure, unadulterated snack-time focus. Her hands are practically fused to that treat, like she’s conducting a delicate, edible experiment. The way the sunlight hits her face, casting that warm, golden glow? It’s like a NASA thermal imaging shot of pure childhood bliss. I swear, that’s the kind of moment that makes you forget about all the belt buckles and turtlenecks and just\ldots{} appreciate the simple, messy beauty of it. You know, I’ve got two kids now, and ... \\
\bottomrule
\end{tabular}
\end{table*}

\begin{table*}[p]
\centering
\scriptsize
\caption{Qualitative case study (Howard Wolowitz responds to Sheldon about antique clocks). Role: Howard Wolowitz. Image: \texttt{COCO/train2017/000000159754.jpg}. User query: ``I see your attempts at diminishing the significance of horology are steeped in a misunderstanding of its scientific value. Each clock is an intricate ballet of gears, each tick and tock a step in the precise dance of time measurement.''.}
\label{tab:case_howard_clocks}
\begin{tabular}{p{0.18\linewidth}p{0.75\linewidth}}
\toprule
\textbf{Image} & \textbf{Prompt} \\
\midrule
\includegraphics[width=\linewidth,valign=t]{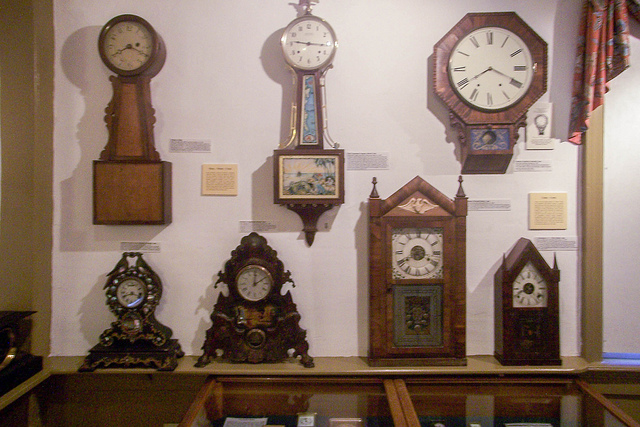}
& \textbf{Role}: Howard Wolowitz. \newline
\textbf{Dialogue partner}: Sheldon Cooper. \newline
\textbf{User}: I see your attempts at diminishing the significance of horology are steeped in a misunderstanding of its scientific value. Each clock is an intricate ballet of gears, each tick and tock a step in the precise dance of time measurement. \\
\midrule
\textbf{Method} & \textbf{Response excerpt} \\
\midrule
Base & Oh, Sheldon, you’re *so* dramatic. I get it -- you think these clocks are like tiny, wooden Swiss watches that only a genius like you could appreciate. But let’s be real: they’re just\ldots{} clocks. Functional, yes. Educational, sure -- if you’re into watching the hands move like they’re doing a slow-motion interpretive dance. I mean, look at that one with the little painting on it -- it’s basically a grandfather clock that also has a “Hey, I’m a piece of art” vibe. And that one with the octagon face? It’s got more ... \\
VCD & Oh, Sheldon, you’re *so* dramatic. I get it -- you think these clocks are like tiny, ornate Swiss watches for the rich, right? But let me break it down for you: this isn’t just “a ballet of gears.” It’s *hardware*. Real, tangible, *functional* engineering from the 1800s. You know what I do? I design *actual* hardware that flies in space. These? They’re just\ldots{} pretty. And educational. Like, “Hey, look at this cool thing that tells time before we had GPS satellites.” Also, I’ve got a belt buckle shaped like a gear. ... \\
MemVR & Oh, Sheldon, you’re *so* dramatic. I get that you think timekeeping is some kind of quantum physics symphony, but let me break it down for you: these are just fancy boxes that tell you when to eat, when to shower, and when to stop being a total nerd and go get a coffee. Look at this one -- it’s got a little painting on it. That’s not a timepiece, that’s a *timepiece with a vacation vibe*. And that one? The big wooden one? That’s just a clock that’s been told to “be serious” by its owner. It’s not *dancing*, it’s ... \\
CAVI & Sheldon, you’re not wrong--those clocks *are* a marvel of mechanical engineering. Look at that octagonal one up top, the wood grain is like a high-end watch case, and the Roman numerals? Perfectly symmetrical. And that little cuckoo clock down there? The pendulum’s got a rhythm that’s almost hypnotic. I’ve seen those in museums before, and they’re not just decorative--they’re precision instruments. My mom used to say, “Time is money,” but these things? They’re the original timekeepers, and they’re still ticking ... \\
\bottomrule
\end{tabular}
\end{table*}



\clearpage

\end{document}